\def\eqref#1{equation~\ref{#1}}
\def\1{\bm{1}}
\def\vtheta{{\bm{\theta}}}
\def\vx{{\bm{x}}}
\def\vz{{\bm{z}}}
\def\vtheta{{\bm{\theta}}}
\def\vphi{{\bm{\phi}}}
\def\mI{{\bm{I}}}
\def\mX{{\bm{X}}}
\DeclareMathAlphabet{\mathsfit}{\encodingdefault}{\sfdefault}{m}{sl}
\SetMathAlphabet{\mathsfit}{bold}{\encodingdefault}{\sfdefault}{bx}{n}
\def\gL{{\mathcal{L}}}
\def\gN{{\mathcal{N}}}
\newcommand{\KL}{D_{\mathrm{KL}}}
\newcommand{\Var}{\mathrm{Var}}
\DeclareMathOperator*{\argmin}{arg\,min}
\newcommand{\bb}[1]{{\mathbb{#1}}}
\title{Unsupervised Out-of-Distribution Detection \\ with Batch Normalization}
\author{Jiaming Song \& Yang Song \& Stefano Ermon \\
Stanford University\\
\texttt{\{tsong, yangsong, ermon\}@cs.stanford.edu} 
}
\begin{document}

\maketitle

\begin{abstract}
Likelihood from a generative model is a natural statistic for detecting out-of-distribution (OoD) samples. However, generative models have been shown to assign higher likelihood to OoD samples compared to ones from the training distribution, preventing simple threshold-based detection rules. We demonstrate that OoD detection fails even when using more sophisticated statistics based on the likelihoods of individual samples. To address these issues, we propose a new method that leverages batch normalization. We argue that batch normalization for generative models challenges the traditional \emph{i.i.d.} data assumption and changes the corresponding maximum likelihood objective. Based on this insight, we propose to exploit in-batch dependencies for OoD detection.
Empirical results suggest that this leads to more robust detection for high-dimensional images.
\end{abstract}

\section{Introduction}

Modern neural network models are known to make poorly calibrated predictions~\citep{guo2017calibration,kuleshov2018accurate}, and can be highly confident even 
for unrecognizable or irrelevant inputs~\citep{nguyen2015deep,moosavi2017universal}. This has serious implications for AI safety~\citep{amodei2016faulty} in real world deployments, where a model could receive inputs that are beyond its training distribution.
Detecting examples that are out of the training distribution becomes a viable solution: when encountering such samples, the model could choose to provide low confidence estimates or even abstain from making predictions~\citep{cortes2017online}.

Density estimation is one approach to detecting out-of-distribution (OoD) samples. A likelihood-based model is trained on the input samples; during evaluation, samples that have low likelihoods are assumed to be out-of-distribution. For high-dimensional inputs (such as images), deep generative models have been able to generate realistic samples as well as achieving good compression capabilities, which indicates high likelihoods on the training distribution~\citep{balle2016end,kingma2013auto,kingma2018glow,oord2016pixel}; 
thus, recent works have considered using deep generative models to detect out-of-distribution samples~\citep{li2018anomaly}. However, contrary to popular belief, density estimates by deep generative models are highly inaccurate~\citep{nalisnick2018do}. For example, a Glow~\citep{kingma2018glow} model trained on CIFAR10 gives higher likelihood estimates to SVHN samples than CIFAR10 ones, which makes accurate OoD detection impossible. While alternative statistics based on likelihood estimates have been proposed to alleviate this issue~\citep{choi2018waic,song2017pixeldefend}, they are not able to detect OoD samples consistently.

In this paper, we propose a new statistic to detect OoD samples using deep generative models trained with batch normalization~\citep{ioffe2015batch}. We argue that using batch normalization not only improves optimization (as argued in~\citep{kohler2018towards,santurkar2018how}), but also challenges the i.i.d. assumption underlying typical  likelihood-based objectives.
We show in that the training objective of generative models with batch normalization can be interpreted as maximum 
pseudo-likelihood over a different joint distribution that does not assume data in the same batch are i.i.d. samples. 
Empirically, we demonstrate that \emph{over this joint distribution, the estimated likelihood of a batch of OoD samples is much lower than that of in-distribution samples}. 
This allows us to propose a permutation test which outperforms existing methods by a significant margin without modifying how the underlying generative model is trained. In particular, we achieve near-perfect performance even on cases such as Fashion MNIST vs. KMNIST, where the likelihood distributions for single samples are nearly identical (see Figure~\ref{fig:demo}, left). While generative models trained with BatchNorm might not provide state-of-the-art likelihood numbers on the test set, it remains competitive and could be more suited for the OoD detection problem~\citep{kingma2018glow,nalisnick2018do}.

\section{Background}
\label{sec:background}
\paragraph{Setup} Let $\mX = \{\vx_i\}_{i=1}^{N}$ be a set of $N$ observations; we assume that ``in-distribution'' samples are drawn i.i.d. from some underlying distribution $\vx \sim p(\vx)$. 
Our goal is to detect whether a test sample $\vx$ is ``out-of-distribution''; we consider drawing the test samples from an unknown distribution $q(\vx)$, but we do not have access to samples from $q(\vx)$ during training. 
Following definitions in ~\citep{hendrycks2016a,liang2017enhancing}, we define OoD examples as the test samples that have low densities under $p(\vx)$. We note that samples from $q(\vx)$ are not necessarily all OoD; for example, if $q = p$ then most samples should not be classified as OoD.
Detection of OoD samples 
is useful for applications such as anomaly detection~\citep{li2018anomaly} and exploration in reinforcement learning~\citep{ostrovski2017count}. 



\begin{figure}[t]
\centering
\begin{minipage}{0.35\textwidth}
    \centering
    \includegraphics[width=\textwidth]{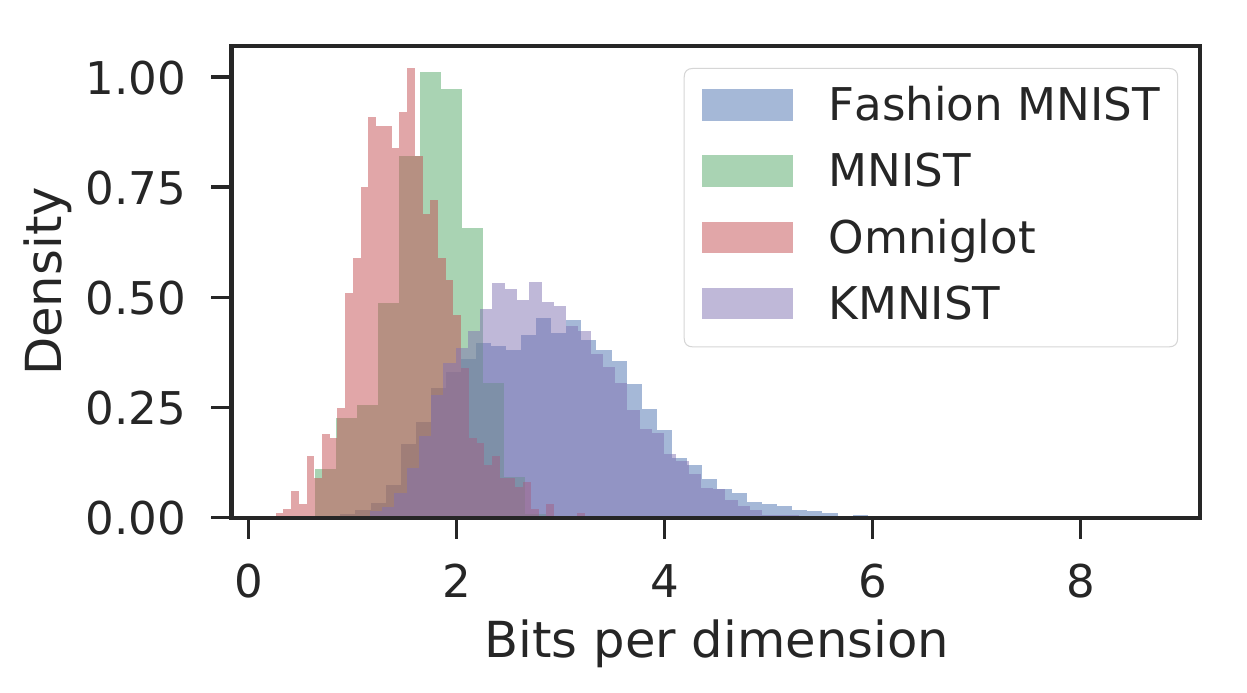}
\end{minipage}
~
\begin{minipage}{0.35\textwidth}
    \centering
    \includegraphics[width=\textwidth]{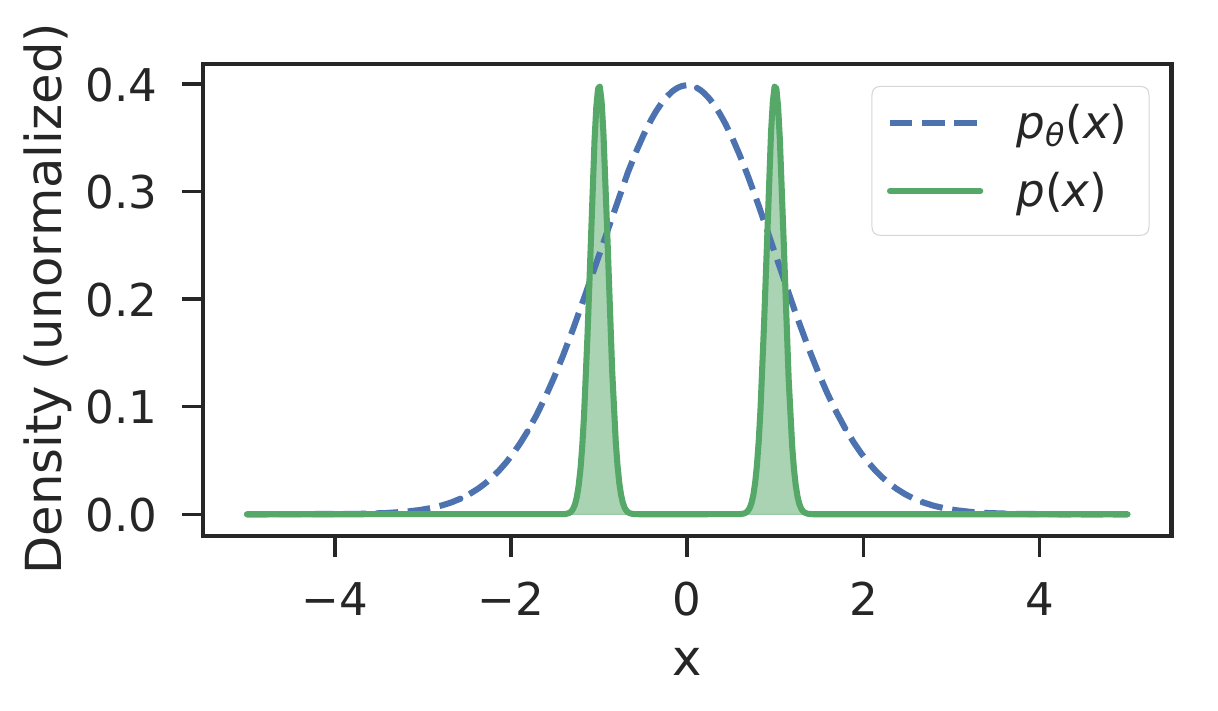}
\end{minipage}
\caption{(Left) Density estimation with a RealNVP model trained on Fashion MNIST. The model assigns similar / higher likelihoods to several OoD datasets. (Right) Model mis-specification can result in OoD samples having higher log-likelihoods.}
\label{fig:demo}
\end{figure}

\paragraph{Likelihood-based generative models for out-of-distribution detection}
Likelihood-based 
models are trained by finding parameters $\vtheta$ such that the corresponding $p_\vtheta(\vx)$ is as close as possible to the 
training distribution $p(\vx)$,
in terms of Kullback-Leibler divergence (KL):
\begin{gather*}
    \argmin_\vtheta \KL(p(\vx) \Vert p_\vtheta(\vx)) = \argmin_\vtheta \bb{E}_{p(\vx)}\left[\log p(\vx) - \log p_\vtheta(\vx)\right] 
\end{gather*}

Finding appropriate solutions depends heavily on choosing the right family of (parametrized) probability distributions.
In the context of images, deep neural networks have demonstrated good 
performance.  Likelihood-based deep generative models include variational autoencoders \citep{kingma2013auto} (where variational lower bounds are optimized), autoregressive models~\citep{oord2016pixel,salimans2017pixelcnn}, and invertible flow models~\citep{dinh2016density,kingma2018glow} (where exact likelihood is optimized).

Given a sample $\vx$, we could attempt to test whether it is OoD or not 
by evaluating its likelihood $-\log p_\vtheta(\vx)$.
Intuitively, if $p_\theta(\vx)$ is sufficiently similar to $p(\vx)$, OoD samples should have low likelihood under $p_\theta(\vx)$.
However, this is not the case for some deep generative models~\citep{nalisnick2018do}, where 
OoD samples 
are observed to have higher likelihoods than samples from the training distribution $p(\vx)$ across several datasets and models. For example, state-of-the-art models trained on CIFAR10~\citep{krizhevsky2009learning} assign higher likelihood to SVHN~\citep{netzer2011reading} samples than CIFAR10 samples. 
A similar phenomenon has also been observed by~\citet{nalisnick2018do} for models trained on Fashion MNIST~\citep{xiao2017fashion} (Figure~\ref{fig:demo} (left)), where images with lower variance across pixels (such as MNIST) are assigned higher likelihoods. 
This invalidates the assumption that OoD samples are assigned lower likelihoods by existing deep generative models.


To alleviate this issue, \citet{choi2018waic} use 
Watanabe Akaike Information Criterion (WAIC):
\begin{align}
    -\text{WAIC}(\vx) = -\bb{E}_{\vtheta}[\log p_\vtheta(\vx)] + \text{Var}_\vtheta[\log p_\vtheta (\vx)]
\end{align}
where the expectations are computed using independently trained model ensembles. This method does not assign higher negative WAIC values for OoD samples in some simple cases (such as learning a mixture of Gaussians~\citep{choi2018waic})
, but is empirically observed it to be effective for certain datasets and generative models.

\paragraph{Likelihood-based permutation tests}



In the context of detecting adversarial examples, \citet{song2017pixeldefend}
considered using permutation tests statistics to determine whether an input $\vx'$ comes from $p(\vx)$ or not. 
One such test statistic 
uses the rank of $p_\vtheta(\vx')$ in $\{p_\vtheta(\vx_1), \cdots, p_\vtheta(\vx_N)\}$, where $\{\vx_i\}_{i=1}^{N}$ is a (training) dataset of $N$ samples from $p(\vx)$:
\begin{align}
    T_{\mathrm{perm}} =  T(\vx'; \vx_1, \ldots, \vx_N) \triangleq \left \vert \sum_{i=1}^{N} \bb{I}[p_\vtheta(\vx_i) \leq p_\vtheta(\vx')] - \frac{N}{2} \right \vert \in [0, N/2 + 1]
\end{align}
where $\bb{I}$ is the indicator function. 
When $T_{\mathrm{perm}}(\vx)$ is large, $\vx$ is assumed to be OoD. This addresses the case where higher likelihood is observed for certain out-of-distribution samples (such as the SVHN vs. CIFAR10 example in~\citet{nalisnick2018do}), as both ``high-likelihood'' and ``low-likelihood'' samples will have $T_{\mathrm{perm}}$ close to $N/2$ (because of the absolute value) and be identified as OoD.

\section{Out-of-distribution samples and model mis-specification}
\label{sec:likelihood}
\citet{nalisnick2018do} find it surprising that deep generative models assign higher likelihood to out-of-distribution samples, given their successful generalizing on a test dataset. However, we argue that this is to be expected when the unerlying generative model $p_\theta(\vx)$ 
is mis-specified, and perhaps this is more likely than previously anticipated in the literature.

We consider a simple example where a mis-specified model would overestimate the likelihood of out-of-distribution samples (Figure~\ref{fig:demo}, right). Let $p(\vx)$ be a uniform mixture of two Gaussians on $\bb{R}^1$, $\gN(-1.0, 0.01)$ and $\gN(1.0, 0.01)$, and $p_\theta(\vx) = \gN(\mu, \sigma^2)$ be our model with parameters $(\mu, \sigma)$. The maximum likelihood solution is $(\mu^\star, \sigma^\star) \approx (0.0, 1.0)$, where mode-covering behavior occurs. $\vx = 0$ has the largest likelihood under $p_\theta(\vx)$, yet it is highly atypical in the original distribution. This is also a failure mode for WAIC as discussed in \citep{choi2018waic}. 


Given the complexity of high-dimensional image distributions, existing likelihood-based generative models are likely to be mis-specified, which invalidates the use of likelihood estimates to perform OoD detection~\citep{white1982maximum}. 
Moreover, model selection is often based on the holdout method~\citep{arlot2010survey}, in which we evaluate log-likelihood over a validation set sampled from $p(\vx)$, 
 but not over out-of-distribution samples. As we do not know the entropy of $p(\vx)$, we can never check whether $\KL(p(\vx) \Vert p_\vtheta(\vx)) \approx 0$ after training. Because of the mode-seeking nature of $\KL$, an alternative distribution $q(\vx) \neq p(\vx)$ can have even lower KL-divergence with $p_\vtheta(\vx)$, i.e., $\KL(q(\vx) \Vert p_\vtheta(\vx)) \ll \KL(p(\vx) \Vert p_\vtheta(\vx))$ even if $p_\vtheta(\vx)$ is trained on samples from $p(\vx)$. In Appendix~\ref{sec:permutation}, we demonstrate that even more sophisticated tests over log-likelihood, such as likelihood-based permutation tests, cannot detect out-of-distribution samples effectively. 

\section{Detecting out-of-distribution samples with batch normalization}
\label{sec:batch-norm-experiments}


In the following sections, we argue that by taking advantage of batch normalization (BatchNorm),   out-of-distribution samples can be detected with existing deep generative models.
For a batch of inputs $\vz = \{\vz_i\}_{i=1}^{b}$ of batch size $b$, batch normalization~\citep{ioffe2015batch} performs normalization over the inputs followed by a parametrized affine transformation:
$$
\text{BatchNorm}(\vz; \gamma, \beta, \epsilon) = \frac{\vz - \bb{E}[\vz]}{\sqrt{\Var[\vz] + \epsilon}} \cdot \gamma + \beta
$$
where $\gamma$, $\beta$ are trainable parameters, and $\epsilon$ is a hyperparameter for numerical stability. The mean $\bb{E}[\vz]$ and variance $\Var[\vz]$ are computed over a single batch in \textit{training mode}, and over the entire training set in \textit{evaluation mode}\footnote{This is typically done by keeping an exponential moving average over the batch statistics during training.}. For deep generative models using batch normalization, existing literature evaluate the log-likelihood on some validation set in \textit{evaluation mode}. The role of \textit{training mode} is generally perceived to accelerate optimization~\citep{kohler2018towards,santurkar2018how}, similar to supervised learning.

\subsection{Training mode decreases likelihood of OoD batches}

\emph{Our central observation is that likelihoods assigned by the model under \textit{training mode} and \textit{evaluation mode} 
are significantly different for OoD samples, but much less so for in-distribution samples}, i.e., when we use BatchNorm 
statistics computed over a single batch.
We evaluate log-likelihood (measured in bits per dimension, BPD\footnote{BPD is defined by the negative log-likelihood divided by the number of dimensions~\citep{theis2015a}; if the likelihood is measured in nats, then an additional division by $\ln 2$ is needed.}) on Fashion MNIST, MNIST, CIFAR and SVHN test sets with several models including RealNVP, VAE and PixelCNN++\footnote{We use an implementation where we replace weight normalization with batch normalization for PixelCNN++, see \href{https://github.com/pclucas14/pixel-cnn-pp}{https://github.com/pclucas14/pixel-cnn-pp}.}. Pre-activation batch normalization layers are used within the residual block for RealNVP and PixelCNN++, and before each convolutional layer for VAE. The datasets are evaluated in both \textit{evaluation mode} and \textit{training mode}.

\begin{table}[t]
\centering
\caption{Log-likelihood (measured in bits per dimension) calculated with RealNVP, VAE, PixelCNN++ models on MNIST, Fashion MNIST, CIFAR10, and SVHN test sets. We train RealNVP and VAE on FashionMNIST, and train RealNVP and PixelCNN on CIFAR10. 
We report likelihood results with \textit{training mode}, \textit{evaluation mode}, and their difference ($\Delta$).}
\begin{subtable}[h]{0.5\textwidth}
\centering
\caption{RealNVP (trained on Fashion MNIST)}
\begin{tabular}{l|cc|c}
\toprule
    Evaluated on       & BN Mode  & BPD & $\Delta$ \\\midrule
    \multirow{2}{*}{Fashion MNIST}  & \textit{evaluation} & 2.92 & \multirow{2}{*}{0.02} \\
     & \textit{training} & 2.94 &      \\\midrule
    \multirow{2}{*}{MNIST}         & \textit{evaluation} & 1.74 & \multirow{2}{*}{7.73} \\
             & \textit{training} & 9.47 &      \\\bottomrule
\end{tabular}
\end{subtable}
\vspace{0.5em}
\begin{subtable}[h]{0.45\textwidth}
\centering
\caption{RealNVP (trained on CIFAR10)}
\begin{tabular}{l|cc|c}
\toprule
    Evaluated on       & BN Mode  & BPD & $\Delta$ \\\midrule
    \multirow{2}{*}{CIFAR10}  & \textit{evaluation} & 3.48 & \multirow{2}{*}{0.03} \\
     & \textit{training} & 3.51 &      \\\midrule
    \multirow{2}{*}{SVHN}         & \textit{evaluation} & 2.44 & \multirow{2}{*}{8.56} \\
             & \textit{training} & 11.10 &      \\\bottomrule
\end{tabular}
\end{subtable}
\begin{subtable}[h]{0.5\textwidth}
\centering
\caption{VAE (trained on Fashion MNIST)}
\begin{tabular}{l|cc|c}
\toprule
    Evaluated on       & BN Mode  & BPD & $\Delta$ \\\midrule
    \multirow{2}{*}{Fashion MNIST}  & \textit{evaluation} & 3.19 & \multirow{2}{*}{0.01} \\
     & \textit{training} & 3.20 &      \\\midrule
    \multirow{2}{*}{MNIST}         & \textit{evaluation} & 1.97 & \multirow{2}{*}{6.53} \\
             & \textit{training} & 8.50 &      \\\bottomrule
\end{tabular}
\end{subtable}
~
\begin{subtable}[h]{0.45\textwidth}
\centering
\caption{PixelCNN++ (trained on CIFAR10)}
\begin{tabular}{l|cc|c}
\toprule
    Evaluated on       & BN Mode  & BPD & $\Delta$ \\\midrule
    \multirow{2}{*}{CIFAR10}  & \textit{evaluation} & 3.21 & \multirow{2}{*}{0.12} \\
     & \textit{training} & 3.33 &      \\\midrule
    \multirow{2}{*}{SVHN}         & \textit{evaluation} & 2.16 & \multirow{2}{*}{1.88} \\
             & \textit{training} & 4.04 &      \\\bottomrule
\end{tabular}
\end{subtable}
\label{tab:batch-norm}
\end{table}

In Table~\ref{tab:batch-norm} we found that in \textit{evaluation mode} MNIST samples have smaller BPD than Fashion MNIST for models trained on Fashion MNIST; SVHN samples have smaller BPD than CIFAR10 for models trained on CIFAR10, which agrees with the observations in~\citep{nalisnick2018do}.
However, as we change from \textit{evaluation mode} to \textit{training mode}, the log-likelihood of the in-distribution samples does not change significantly, while that of OoD samples plummets (see $\Delta$ in Table~\ref{tab:batch-norm}). 
In Appendix~\ref{app:batchnorm-add}, we demonstrate that this sudden decrease in log-likelihood not only happens for samples from other datasets, but also adversarial samples that are able to fool likelihood models and likelihood-based permutation tests. This empirically suggests that simply by using \textit{training mode} during evaluation, we are able to consistently detect a batch of out-of-distribution samples. We further note that no special modifications is made over the training procedure, and no additional OoD datasets are required. 

\begin{figure}[h]
\centering
  \begin{minipage}[c]{0.35\textwidth}
    \includegraphics[width=\textwidth]{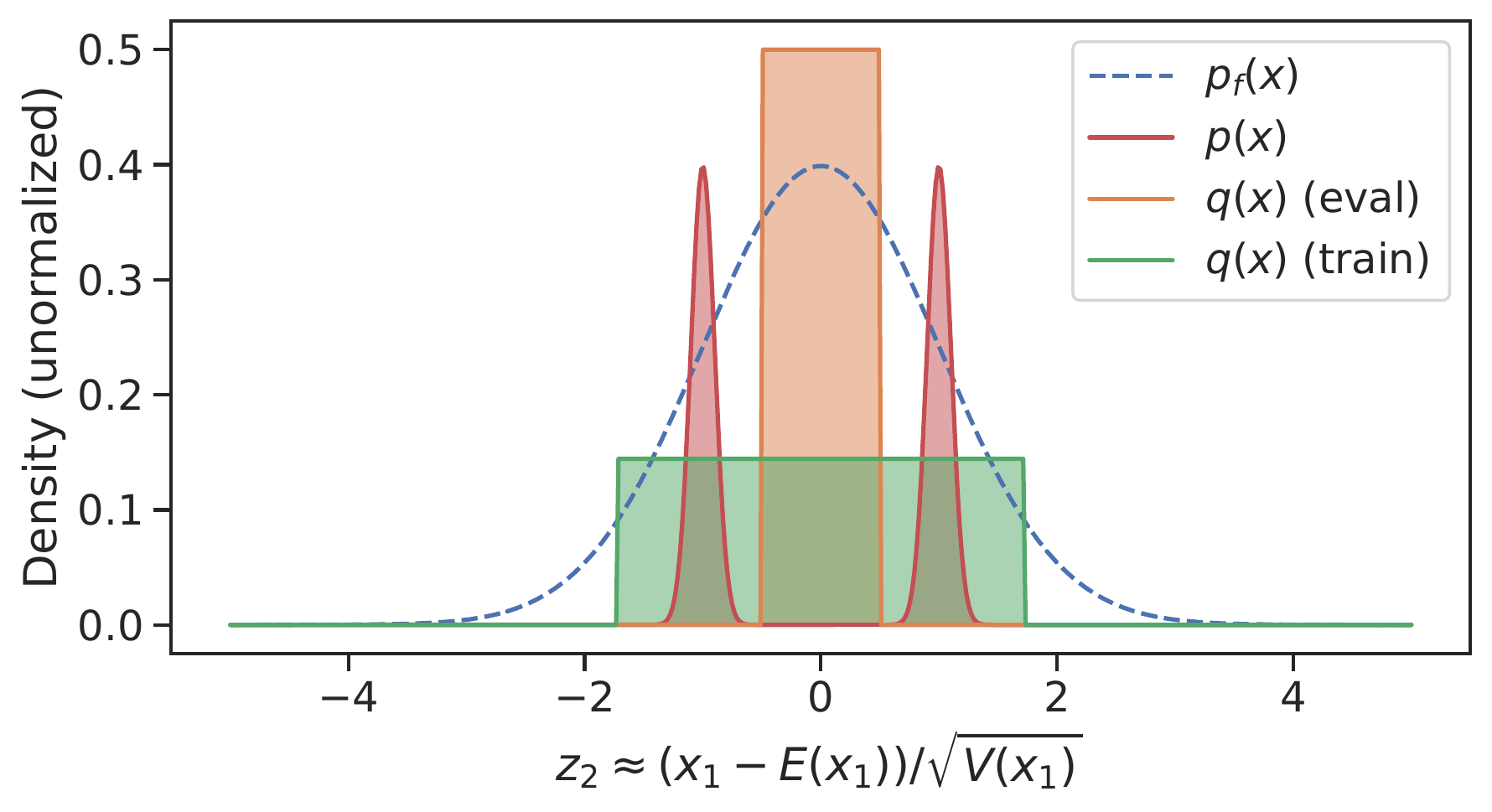}
  \end{minipage}
  ~
  \begin{minipage}[c]{0.55\textwidth}
    \caption{
       Example demonstrating how BatchNorm mitigates high-likelihood in OoD distributions. $p(x)$, $p_f(x)$ and $q(x)$ denote the original, model and OoD distributions. Distribution of $z_2$ are different for $q(x)$ under the two modes of BatchNorm, leading to different likelihood results. 
    } \label{fig:1d_demo_bn}
  \end{minipage}
\end{figure}

\subsection{Explaining the effectiveness of batch normalization}
\label{sec:batchperm-explain}
Why is batch normalization effective at detecting OoD batches while other likelihood-based generative models (e.g., Glow trained with ActNorm) fail? First we provide an example to illustrate why \textit{training mode} and \textit{evaluation mode} provide different likelihood estimates for OoD distributions. Then we argue that with \textit{training mode}, the learned generative model no longer assumes samples are i.i.d., which differs from models without BatchNorm.


\subsubsection{Example of BatchNorm Mitigating Model Mis-specification}
\label{sec:example}

In Figure~\ref{fig:1d_demo_bn}, we illustrate a case where BatchNorm could be useful for OoD detection.
Suppose we try to learn some distribution $p(\vx)$ with a 2-d flow model with one coupling layer~\citep{dinh2016density}: $f: (x_1, x_2) \mapsto (z_1, z_2)$ where $z_1 = x_1$, $z_2 = x_2 + (x_1 - E(x_1)) / \sqrt{V(x_1)} \cdot \gamma + \beta$ with learnable parameters $\gamma, \beta$; $E(x_1)$, $V(x_1)$ are the empirical mean and variance of a batch of $x_1$ in \textit{training mode}, and the mean and variance of $p(x_1)$ in \textit{evaluation mode}. To simplify our analysis, 
We assume that the training distribution $p(x_1, x_2) = p(x_1) p(x_2)$ where $p(x_1)$ is a mixture of two Gaussians ($\gN(-1, \sigma^2)$ and $\gN(1, \sigma^2)$ with $\sigma \to 0$) and $p(x_2) = \delta_0(x_2)$ (Dirac delta) and batch sizes are infinite; hence $z_2 \approx (x_1 - E(x_1)) / \sqrt{V(x_1)} \cdot \gamma + \beta$. Since the Jacobian of $f$ is triangular and has a determinant of 1, the likelihood of the flow model is simply $p_f(x_1, x_2) = p(f(x_1, x_2))$; we assume the prior $p(z_1, z_2)$ is standard Gaussian $\mathcal{N}(0, I)$. The optimal parameters are then $\gamma \approx 1$\footnote{By maximum likelihood, $\gamma \approx \arg \min_s -s^2/2 + \log s = 1$.} and $\beta = 0$.


Now let us consider $q(x_1, x_2) = q(x_1) p(x_2)$ where $q(x_1) = \mathrm{Uniform}(-0.5, 0.5)$. In \textit{evaluation mode}, $z_2 \approx x_1$ and $\bb{E}_q[\log p_f(\vx)] \approx -1.92$; in \textit{training mode}, $z_2 \approx \sqrt{12} x_1$, which decreases the log-likelihood to $\bb{E}_q[\log p_f(\vx)] \approx -2.38$. We include a more detailed analysis in Appendix~\ref{app:example}. 
In this case, BatchNorm manages to learn a global property of the input distribution (variance), and use this to counteract major distributional shifts (e.g. the proposed uniform distribution $q(x)$
). The effect leads to lower log-likelihoods in \textit{training mode} compared to that in \textit{evaluation mode}.  

\subsubsection{Revisiting the i.i.d. Assumptions for Batch-normalized Generative Models}

While it is difficult to analytically characterize the change of likelihood estimates in \textit{training mode} for high-dimensional images and deep neural networks, we provide a probabilistic interpretation as to why \textit{training mode} exhibits different behaviors than \textit{evaluation mode}.

\paragraph{\textit{Training mode} breaks the \emph{i.i.d.} data assumption} In \textit{training mode}, the output for a particular sample $\vx_j$ is affected by other samples in the same batch (denoted as $\vx_{-j}$), so they are not treated as i.i.d for the generative models in \textit{training mode}. 
This suggest that using BatchNorm when training generative models not only changes the optimization landscape but also modifies the objective: it is no longer $\frac{1}{N} \sum_{i=1}^{N} \log p_\vtheta(\vx_i)$ where each sample is assumed to be i.i.d.

\paragraph{Pseudo-likelihood perspective of \textit{training mode}} How should we interpret the training objective with batch normalization?
For each sample $\vx_j$, its corresponding ``likelihood'' objective depends on \textit{the other samples from the same batch}; we denote this as $\ell_\vtheta(\vx_j; \vx_{-j})$. First, we show that the integral of $\exp(\ell_\vtheta(\vx_j; \vx_{-j}))$ over any particular sample ($\vx_{j}$) when we fix the other samples in the batch ($\vx_{-j}$) is one; this allows us to treat $\exp(\ell)$ as a probability density function\footnote{For VAE, we assume that the inference distribution is always optimal, i.e., there is no gap between the evidence lower bound and the log-likelihood, so we can treat the objective as exact maximum likelihood.}.

\begin{restatable}{proposition}{normalization}
\label{prop:normalization}
Let $b \in \bb{N}, b > 1$ be the batch size, and for all possible batch of samples of size $(b-1)$, denoted as $\vx_{-j}$,
if $\ell_\theta(\vx_j; \vx_{-j})$ is the training objective over $\vx_j$ for a likelihood-based generative model
 with batch normalization, then:
$$
\int_{\vx_j} \exp(\ell_\vtheta(\vx_{j}; \vx_{-j})) \mathrm{d} \vx_j = 1.
$$
where we consider likelihood-based generative models that sample either via some parametrized distribution (such as VAE and PixelCNN) or via parametrized invertible transformations constructed via affine coupling layers (such as RealNVP).
\end{restatable}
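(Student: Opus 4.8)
The plan is to argue, for each fixed realization $\vx_{-j}$ of the other batch elements, that the map $\vx_j \mapsto \exp(\ell_\vtheta(\vx_j;\vx_{-j}))$ is a genuine probability density in $\vx_j$, the point being that the cross‑sample coupling introduced by BatchNorm enters the per‑sample objective only as extra \emph{conditioning} (for the explicitly parametrized models) or only as part of a $\vx_{-j}$‑dependent change of variables (for the flow), and in neither form can it spoil normalization in $\vx_j$. I would treat the two model families of the statement separately.

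For PixelCNN I would first write $\ell_\vtheta(\vx_j;\vx_{-j}) = \sum_k \log p_\vtheta(x_{j,k}\mid x_{j,<k},\vx_{-j})$, the $k$‑th factor being the network's output conditional for coordinate $k$ of sample $j$; because the BatchNorm statistics are pooled along the batch axis (not across coordinates of one image), this factor depends on $x_{j,<k}$ and on $\vx_{-j}$ but not on $x_{j,>k}$. Then $\int \exp(\ell_\vtheta(\vx_j;\vx_{-j}))\,\mathrm d\vx_j$ is evaluated coordinate by coordinate in reverse order: integrating out $x_{j,K}$ gives $1$ since $p_\vtheta(\cdot\mid x_{j,<K},\vx_{-j})$ is a normalized conditional, and the remaining factors do not depend on $x_{j,K}$, so the step repeats down to $x_{j,1}$. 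For the VAE I would invoke the stated tight‑ELBO assumption, under which $\exp(\ell_\vtheta(\vx_j;\vx_{-j}))$ equals the VAE's marginal likelihood for slot $j$ with the rest of the batch as context, i.e. an integral over latents of the decoder's per‑slot output density; since that output is a normalized density in $\vx_j$ for every fixed value of the latents and of $\vx_{-j}$, swapping the $\vx_j$ integral with the latent integral (Fubini) yields $1$.

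For RealNVP the argument is a change of variables. For fixed $\vx_{-j}$, let $T_{\vx_{-j}}:\vx_j\mapsto \vz_j$ be the map obtained by pushing the whole batch through the stack of affine‑coupling (and BatchNorm) layers — so the per‑layer normalization constants are computed from all $b$ samples, including $\vx_j$ — and reading off the $j$‑th output. I would establish (i) that $T_{\vx_{-j}}$ is a diffeomorphism of the sample space, which is where $b>1$ enters: with a single sample the batch mean equals that sample and a BatchNorm layer collapses to a constant map, whereas for $b>1$ sample $j$ still enters its own normalized value non‑degenerately; and (ii) that $\exp(\ell_\vtheta(\vx_j;\vx_{-j})) = p_{\mathrm{prior}}(T_{\vx_{-j}}(\vx_j))\,\bigl|\det \partial T_{\vx_{-j}}/\partial \vx_j\bigr|$, i.e. the per‑sample log‑determinant accumulated during training really is the Jacobian of $T_{\vx_{-j}}$. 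Granting these, $\int \exp(\ell_\vtheta(\vx_j;\vx_{-j}))\,\mathrm d\vx_j = \int p_{\mathrm{prior}}(\vz_j)\,\mathrm d\vz_j = 1$ by change of variables.

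The main obstacle is step (ii). Since BatchNorm couples samples symmetrically, the Jacobian of the \emph{batch} map is not block‑diagonal in the sample index, so one cannot naively assert that its $(j,j)$ block is the product of the $(j,j)$ blocks of the per‑layer maps; moreover sample $j$'s input to a later layer depends on $\vx_j$ through the other samples. The way through is the triangular structure of an affine coupling layer: in the "pass‑through coordinates, then transformed coordinates" ordering, a single layer's batch Jacobian is block lower‑triangular with the transformed‑coordinate block \emph{diagonal in the sample index} (the scale $s_j$ depends only on the pass‑through half of the batch, never on sample $j$'s transformed half), so its determinant is $\prod_i \exp(\sum_d s_{i,d})$ irrespective of how BatchNorm mixes samples in the off‑diagonal block. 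I would propagate this through the composition to identify $\det \partial T_{\vx_{-j}}/\partial\vx_j$ with $\prod_{\text{layers}}\prod_d \exp(s_{j,d})$; the careful bookkeeping here — in particular tracking what happens when successive coupling layers swap which half of the coordinates is passed through — is the delicate part of the proof, everything else being routine.
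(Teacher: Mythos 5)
Your handling of the two ``parametrized distribution'' cases (PixelCNN via telescoping the autoregressive conditionals, VAE via Fubini under the tight-ELBO assumption) is correct and is essentially the paper's own argument made precise: once $\vx_{-j}$ is fixed, BatchNorm renders the network a deterministic function of $\vx_j$, so it outputs parameters of a normalized conditional density. For the flow case the paper likewise argues ``the conditional map $\vx_j \mapsto \vz_j$ is deterministic and invertible, hence normalized'' and stops there; it never engages with the point you correctly single out as delicate, and that point is where your sketch would break down.

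The gap is in your step (ii), and your proposed route through it proves the wrong statement. Ordering one coupling layer's coordinates as (pass-through, transformed) does make the \emph{full batch} Jacobian block-triangular with determinant $\prod_i\prod_d s_{i,d}$, and the chain rule composes these into the determinant of the full $bn\times bn$ Jacobian of the batch map. But the proposition needs the determinant of the $(j,j)$ block of that composition, $\det\partial T_{\vx_{-j}}/\partial\vx_j$, and a diagonal block's determinant is not recoverable from the whole matrix's. In fact the identification $\det\partial T_{\vx_{-j}}/\partial\vx_j=\prod_{\text{layers}}\prod_d s_{j,d}$ fails at finite $b$. Take two scale-only coupling layers on $(x,y)\in\bb{R}^2$: $y'=e^{s_1(\hat x)}y$ followed by $x''=e^{s_2(\hat y')}x$, with $\hat{\cdot}$ the batch-normalized value. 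Writing $a=\partial\hat y'_j/\partial y'_j$, $c=\partial\hat y'_j/\partial y'_i$ for $i\neq j$, and $q=\partial y'_i/\partial x_j$, a direct computation gives $\det\partial(x''_j,y'_j)/\partial(x_j,y_j)=e^{s_1+s_2}\,(1+s_2'\,c\,q\,x_j)$: the cross-term cancellation that makes the BatchNorm-free version exact is spoiled by the route $x_j\to y'_i\to\hat y'_j$, i.e.\ sample $j$ perturbing its own later-layer statistics \emph{through the other samples' intermediate activations} --- precisely the channel you mention and then hope to absorb. Since $c$ and $q$ are each $O(1/b)$ and there are $b-1$ such routes, the multiplicative error is $O(1/b)$: the code-level objective $\exp(\ell_\vtheta(\cdot\,;\vx_{-j}))$ integrates to $1$ only asymptotically in $b$, not exactly for finite $b>1$ as the proposition asserts (and the factor $1+s_2'cqx_j$ can even vanish, so $T_{\vx_{-j}}$ need not be a local diffeomorphism everywhere). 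The statement can be rescued either by passing to $b\to\infty$, or by redefining $\ell_\vtheta$ for flows to use the true Jacobian of the conditional map --- but then it is no longer the trained objective. The paper's proof silently skips over this; your write-up at least names the obstacle, but the triangularity argument you propose does not clear it.
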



We defer the proofs in Appendix~\ref{app:normalization}. Moreover, if we interpret $\ell_\theta(\vx_j; \vx_{-j})$ as the \emph{conditional} log-likelihood under a certain joint generative model $\tilde{p}_{\vtheta}$ over an entire batch, our existing ``log-likelihood'' objective is a surrogate to the log-likelihood objective over the joint distribution $\tilde{p}_\vtheta(\vx_1, \vx_2, \ldots, \vx_b)$.

\begin{restatable}{proposition}{pseudo}
\label{prop:pseudo}
There exists a joint distribution $\tilde{p}_\vtheta(\vx_1, \vx_2, \ldots, \vx_b)$ such that for all $j$,  
$\tilde{p}_\vtheta(\vx_j \vert \vx_{-j}) \rightarrow \ell_\vtheta(\vx_j; \vx_{-j})$ as $b \rightarrow \infty$.
Then, the objective for one batch $\{x_j\}_{j=1}^{b}$ in \textit{training mode}
\begin{gather}
    \gL_{\text{train}}(\{x_j\}_{j=1}^{b}; \vtheta) = \sum_{j=1}^{b} \ell_\vtheta(\vx_j \vert \vx_{-j}) \triangleq \sum_{j=1}^{b} \log \tilde{p}_\vtheta(\vx_j \vert \vx_{-j}) 
\end{gather}
is the pseudo-log-likelihood for the joint distribution $\tilde{p}_\vtheta(\vx_1, \vx_2, \ldots, \vx_b)$ as $b \rightarrow \infty$.
\end{restatable}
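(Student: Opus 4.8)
The second half of the statement — that $\gL_{\text{train}}$ is the pseudo-log-likelihood of $\tilde p_\vtheta$ — will be essentially definitional once the joint $\tilde p_\vtheta$ is in hand: Besag's pseudo-log-likelihood of a joint density $p(\vx_{1:b})$ is exactly $\sum_{j=1}^b \log p(\vx_j \mid \vx_{-j})$, so it suffices to produce a joint whose full conditionals are (in the limit) the densities $\exp(\ell_\vtheta(\vx_j;\vx_{-j}))$. The real content, and where I would spend the proof, is the \emph{existence} of such a joint and the precise sense in which its conditionals converge. I would also first clarify the notation in the statement: the arrow should be read as $\log\tilde p_\vtheta(\vx_j\mid\vx_{-j}) \to \ell_\vtheta(\vx_j;\vx_{-j})$, equivalently $\tilde p_\vtheta(\vx_j\mid\vx_{-j}) \to \exp(\ell_\vtheta(\vx_j;\vx_{-j}))$.

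\textbf{Step 1 (valid conditionals).} By Proposition~\ref{prop:normalization}, $\int_{\vx_j}\exp(\ell_\vtheta(\vx_j;\vx_{-j}))\,\mathrm d\vx_j = 1$ for every fixed $\vx_{-j}$, so $q_j(\vx_j \mid \vx_{-j}) := \exp(\ell_\vtheta(\vx_j;\vx_{-j}))$ is a bona fide conditional density of $\vx_j$ given $\vx_{-j}$. The obstruction is the classical compatibility-of-conditionals problem: an arbitrary family $\{q_j\}_{j=1}^b$ need not be the conditional family of any single joint. \textbf{Step 2 (asymptotic compatibility).} This is where $b\to\infty$ enters. Each $\ell_\vtheta(\vx_j;\vx_{-j})$ depends on $\vx_{-j}$ only through the BatchNorm statistics — empirical means and variances of activations aggregated over the batch — and perturbing a single one of the $b$ samples changes each such statistic by $O(1/b)$, which under the affine-coupling / parametrized-sampler structure covered by Proposition~\ref{prop:normalization} is an explicit computation. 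Hence the cyclic ratio conditions characterizing compatibility (Brook's expansion / Arnold–Press) hold up to an error that is $O(1/b)$, uniformly on compact sets on which the relevant batch variances stay bounded away from $0$; i.e. the family $\{q_j\}$ is compatible up to $o(1)$.

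\textbf{Step 3 (constructing the joint and identifying the objective).} Given the near-compatible conditionals I would define the joint by the Brook/Besag product expansion about a fixed reference point $\vx^{0}$,
\[
\tilde p_\vtheta(\vx_{1:b}) \;=\; \frac{1}{Z}\prod_{j=1}^{b}\frac{q_j\bigl(\vx_j \mid \vx_{1:j-1},\, \vx^{0}_{j+1:b}\bigr)}{q_j\bigl(\vx^{0}_j \mid \vx_{1:j-1},\, \vx^{0}_{j+1:b}\bigr)},
\]
with finiteness of $Z$ coming from Proposition~\ref{prop:normalization} together with the boundedness used in Step 2; alternatively, take $\tilde p_\vtheta$ to be the (unique, by positivity/irreducibility) stationary distribution of the Gibbs sampler with kernels $q_j$. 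By Step 2 the full conditionals of this $\tilde p_\vtheta$ differ from $q_j$ by $o(1)$ as $b\to\infty$, which is the claimed convergence. Then $\gL_{\text{train}}(\{\vx_j\};\vtheta) = \sum_{j}\ell_\vtheta(\vx_j;\vx_{-j}) = \sum_j \log q_j(\vx_j\mid\vx_{-j}) = \sum_j \log\tilde p_\vtheta(\vx_j\mid\vx_{-j}) + o(1)$, and the right-hand side is by definition the pseudo-log-likelihood of $\tilde p_\vtheta$, so $\gL_{\text{train}}$ is the pseudo-log-likelihood of $\tilde p_\vtheta$ in the limit $b\to\infty$.

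\textbf{Main obstacle.} The crux is Steps 2–3: converting the heuristic ``the BatchNorm coupling is $O(1/b)$-weak, so the conditionals are almost compatible'' into an honest joint distribution with controlled conditionals. One must (i) make the $O(1/b)$ sensitivity of the BatchNorm statistics precise for the specific model classes named in Proposition~\ref{prop:normalization} while handling the unbounded tails of Gaussian/flow densities (likely by restricting to a regime where batch variances are bounded away from degeneracy), and (ii) verify normalizability of the Brook expansion — equivalently, ergodicity of the associated Gibbs chain. By contrast, the pseudo-likelihood identification itself is immediate from Besag's definition once the joint exists, and the first displayed equation in the statement is already given.
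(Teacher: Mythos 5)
Your proposal is correct and follows essentially the same route as the paper's proof: establish that the conditionals $\exp(\ell_\vtheta(\vx_j;\vx_{-j}))$ become compatible as $b\to\infty$ because a single sample perturbs the batch statistics only negligibly, invoke the Arnold--Press/Brook compatibility machinery to obtain a joint $\tilde p_\vtheta$, and then identify $\gL_{\text{train}}$ with Besag's pseudo-log-likelihood by definition. Your Steps 2--3 are in fact more explicit than the paper (which simply cites Theorem 4.1 of Arnold and Press and asserts the compatibility condition holds for large $b$), but the underlying argument is the same.
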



\paragraph{\textit{Training mode} versus \textit{evaluation mode}} In \textit{evaluation mode}, the samples are treated as i.i.d., since the batch statistics is fixed and samples within the same batch do not affect each other.
The objective for one batch $\{x_j\}_{j=1}^{b}$ evaluated in \textit{evaluation mode} is simply:
\begin{align}
    \gL_{\text{eval}}(\{x_j\}_{j=1}^{b}; \vtheta) = \sum_{j=1}^{b} \log p_\vtheta(\vx_j)
\end{align}

It is evident that $\gL_{\text{train}}$ and $\gL_{\text{eval}}$ are not the same objective due to the differences in i.i.d. assumptions for samples within the same batch. $\gL_{\text{train}}$ depends on the input batch statistics, whereas $\gL_{\text{eval}}$ does not. If we consider OoD batches that exhibit major distributional shifts, then it is likely that $\gL_{\text{train}}$ differs from $\gL_{\text{eval}}$.
From the empirical evidence in Table~\ref{tab:batch-norm}, OoD samples have much lower BPD on \textit{training mode} than on \textit{evaluation mode}, whereas the same does not happen for in-distribution samples. 


\section{A permutation test based on batch normalization}
\label{sec:batchperm}
Assume that during testing we have access to some ``test dataset'' $\hat{\mX}$, and our goal is to identify for each test sample $\hat{\vx} \in \hat{\mX}$ whether it is OoD or not.
Based on the observations over OoD batches, we propose a permutation test 
for each test sample $\hat{\vx}$. 
\paragraph{Interpolation between \textit{training} and \textit{evaluation}} First, we propose a procedure that interpolates between \textit{training mode} and \textit{evaluation mode}, generalizing the settings in Section~\ref{sec:batch-norm-experiments}.
We randomly obtain a batch of samples $\hat{\mX}_{b_1}$ from $\hat{\mX}$ with size $b_1$ and a batch of samples ${\mX}_{b_0}$ from $p(\vx)$ with size $b_0$ (e.g., from the training dataset $\mX$ used to fit the generative model). 
For each $\hat{\vx}_{j} \in \hat{\mX}_{b_1}$ indexed by $j$, we compute its log-likelihood under \textit{training mode} by mixing $\hat{\mX}_{b_1}$ and ${\mX}_{b_0}$ in the same batch:
\begin{align}
    \log \tilde{p}_\vtheta(\hat{\vx}_{j}| \vx_{-j}) \quad \text{where} \quad \vx_{-j} = ({\mX}_{b_0} \cup \hat{\mX}_{b_1}) \setminus \{\hat{\vx}_j\}
    \label{eq:mix-batch}
\end{align}
Let $b = b_1 + b_0$ be the batch size and let $r = b_1 / b$ be the ratio of the test samples in the entire batch. 
We note that the above procedure interpolates evaluation with \textit{training mode} and evaluation with \textit{evaluation mode}. As $r \to 0$, the effect on the batch statistics is dominated by the training samples $\mX_{b_0}$, so Equation~\ref{eq:mix-batch} converges to \textit{evaluation mode}; as $r \to 1$, the effect on the batch statistics is dominated by the test samples $\hat{\mX}_{b_1}$, so Equation~\ref{eq:mix-batch} converges to \textit{training mode}.
As we increase $r$ from 0 to 1, we are moving from \textit{evaluation mode} to \textit{training mode}. 

The observations in Section~\ref{sec:batch-norm-experiments} suggest that $\log \tilde{p}_\vtheta(\hat{\vx}_{j}| \vx_{-j})$ should be relatively stable if $q(\vx)$ is close to $p(\vx)$ and increase significantly if $q(\vx)$ is far from $p(\vx)$, which is indeed the case empirically. We consider evaluating the average log-likelihood on the test samples when the ratio $r$ is varied, and show the results on several datasets and models in Figure~\ref{fig:batch-perm-verify}. As we increase the ratio of test samples, the corresponding log-likelihood decreases dramatically for OoD samples, but is relatively stable for in-distribution samples.

\begin{figure}
\centering
\begin{subfigure}{0.45\textwidth}
    \centering
    \includegraphics[width=\textwidth]{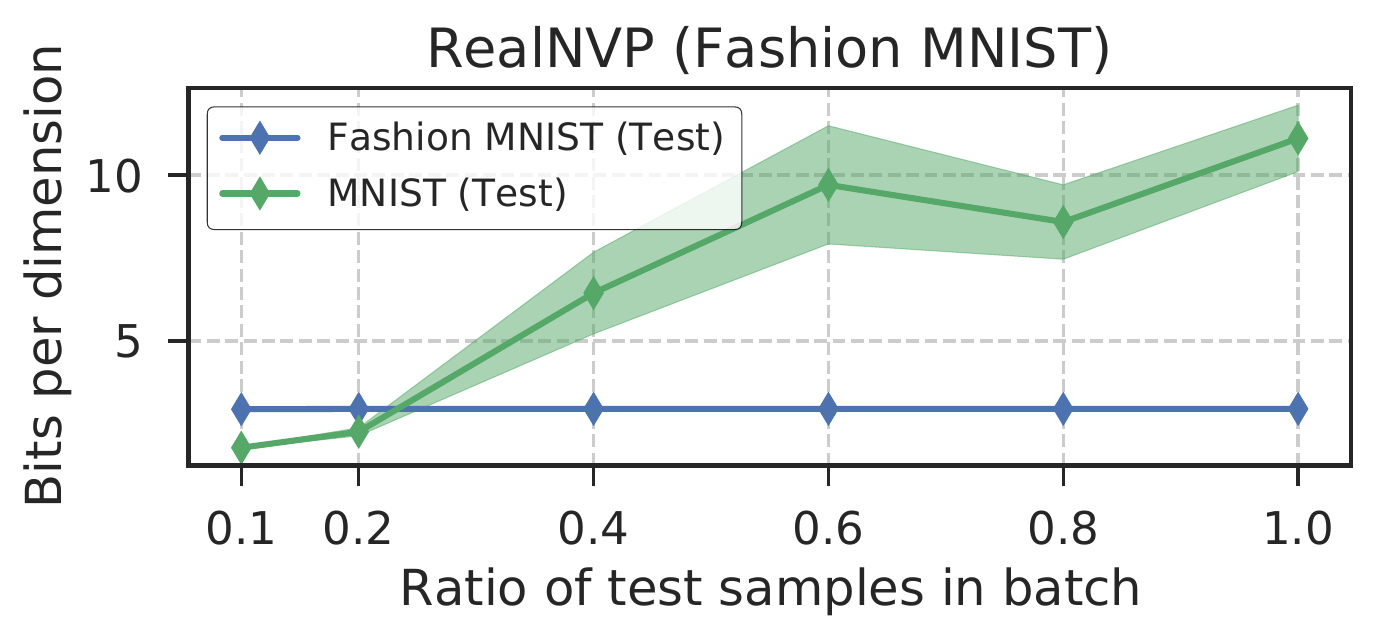}
\end{subfigure}
~
\begin{subfigure}{0.45\textwidth}
    \centering
    \includegraphics[width=\textwidth]{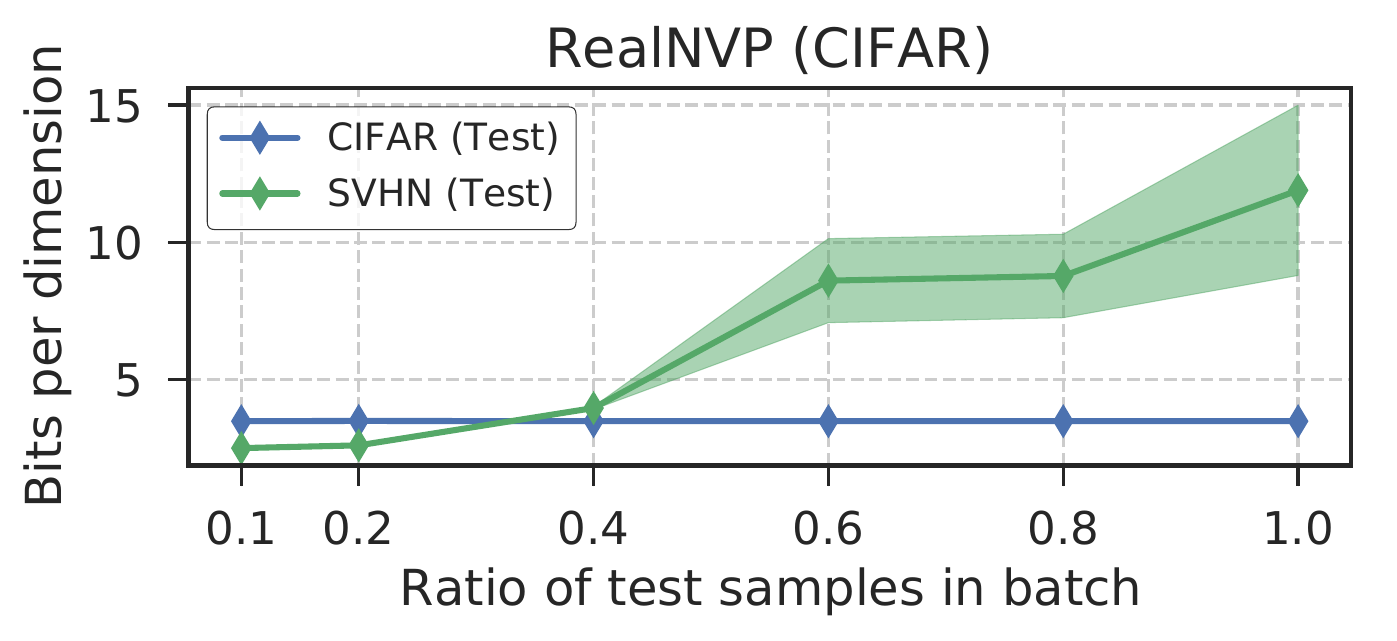}
\end{subfigure}
~
\begin{subfigure}{0.45\textwidth}
    \centering
    \includegraphics[width=\textwidth]{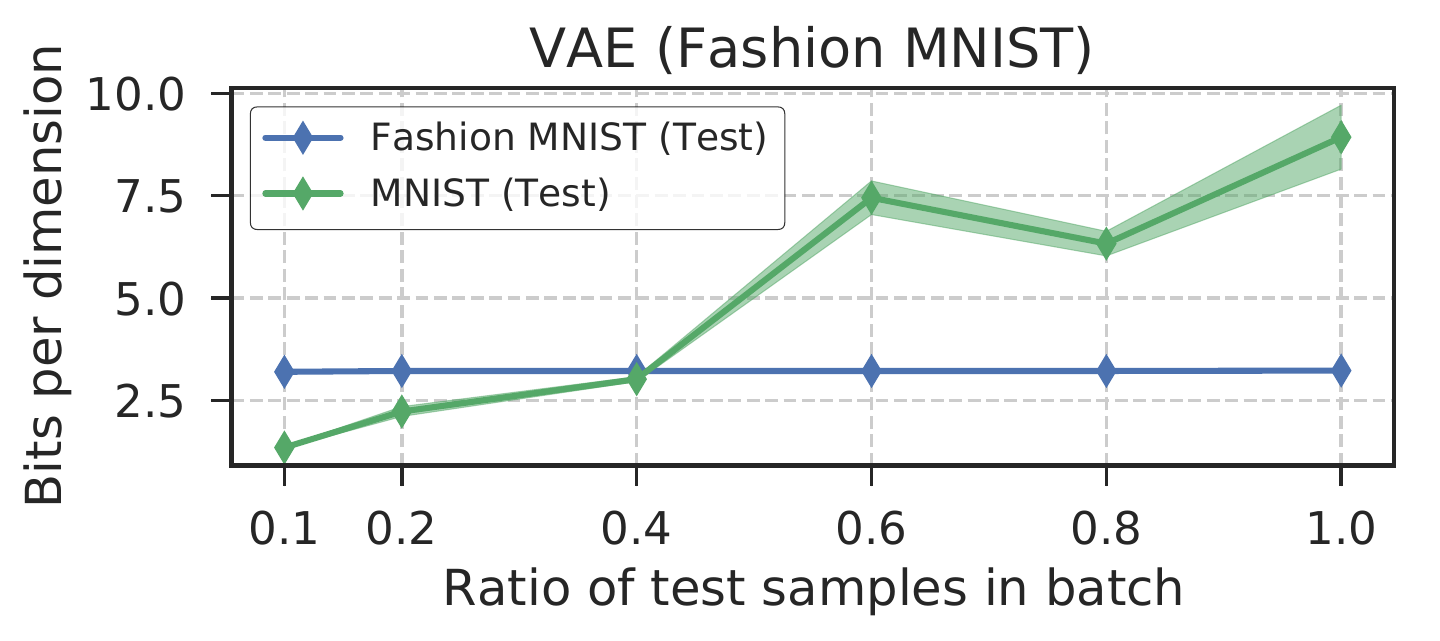}
\end{subfigure}
~
\begin{subfigure}{0.45\textwidth}
    \centering
    \includegraphics[width=\textwidth]{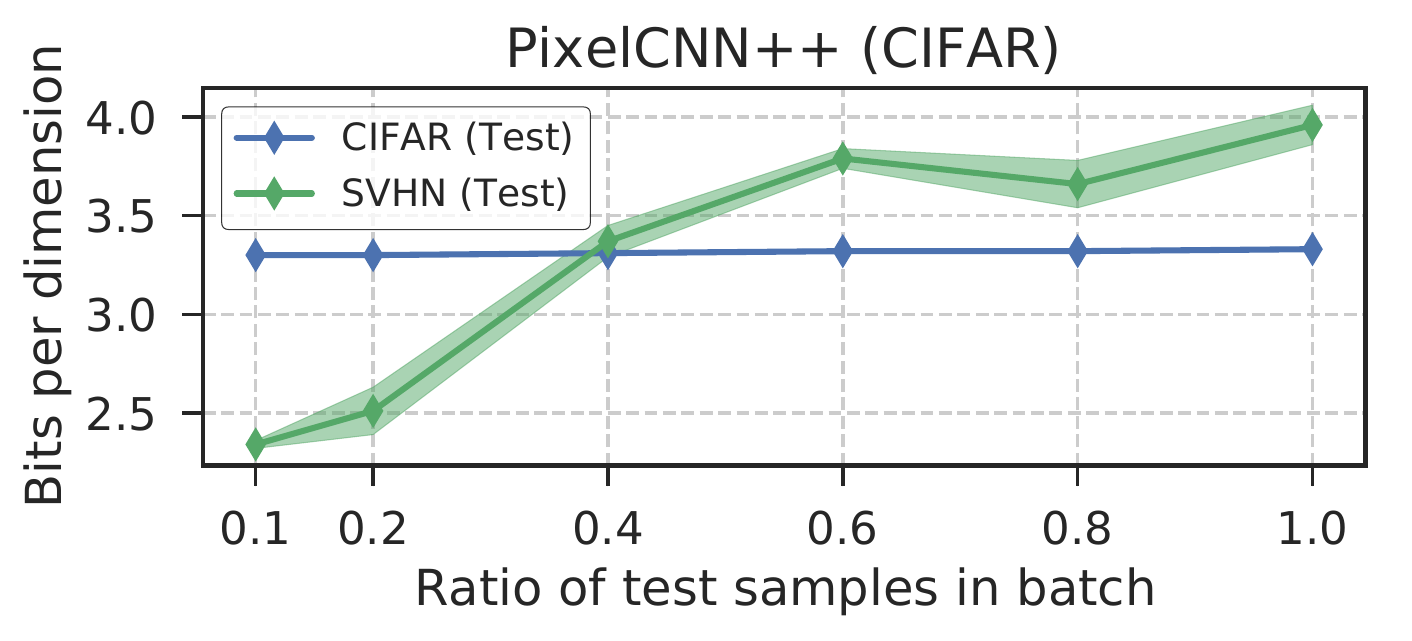}
\end{subfigure}
\caption{Average BPD ($-\log \tilde{p}_\vtheta(\vx_j | \vx_{-j})$) for test samples with varying ratios of test samples in the batch. The BPD of in-distribution samples do not increase as the ratio increase, yet that of OoD samples increase significantly. This justifies the use of $\Delta_{b, r_1, r_2}({\vx}')$ for OoD detection.}
\label{fig:batch-perm-verify}
\end{figure}

\paragraph{Permutation test with BatchNorm} We proceed to propose a permutation test statistics based on \textit{training mode} evaluation. 
For a test sample $\hat{x}$, a fixed batch size $b$ and fixed ratio $r$, we select $(rb - 1)$ samples from $q(\vx)$ and $(1 - r) b$ samples from $p(\vx)$. The expected conditional likelihood of $\hat{\vx}$ is used as the test statistic:
\begin{align}
S_{b, r}(\vx') = \bb{E}_{p, q}\left[\log \tilde{p}_\vtheta(\hat{\vx}|\mX_{(1-r)b}, \hat{\mX}_{rb-1})\right] \quad \text{where} \quad \mX_{(1-r)b} \sim p(\vx), \hat{\mX}_{rb-1} \sim q(\vx)
\end{align}

In practice, we use monte carlo estimates for the expectation in $S_{b, r}(\vx')$. We select two different ratios $r_1, r_2 \in (0, 1)$ where $r_1 < r_2$, and compute the differences in $S_{b, r}(\vx')$:
\begin{align}
    \Delta_{b, r_1, r_2}({\vx}') = \left\vert S_{b, r1}(\vx') - S_{b, r2}(\vx') \right\vert
\end{align}
We proceed to use the rank of $\Delta_{b, r_1, r_2}(\hat{\vx})$ in the training set as our test statistic:
\begin{align}
    T_{b, r_1, r_2} = T({\vx}'; \vx_1, \ldots, \vx_N) \triangleq \sum_{i=1}^{N} \bb{I}[\Delta_{b, r_1, r_2}(\vx_i) \leq \Delta_{b, r_1, r_2}(\vx')]
\end{align}

For in-distribution samples, we expect $\Delta_{b, r_1, r_2}$ to be small across all choices of $r_1$ and $r_2$; for out-of-distribution samples, $\Delta_{b, r_1, r_2}$ should be large if $(r_2 - r_1)$ is large. Therefore, if $q(\vx) = p(\vx)$, then the statistics computed over the $\vx \sim q(\vx)$ should be approximately uniformly distributed over $[0, N]$, whereas if $q(\vx) \neq p(\vx)$, then the statistics computed over out-of-distribution samples would be concentrated around $N$.  

\section{Experiments}
\label{sec:experiments}

\begin{table}[h]
    \centering
    \caption{Out-of-distribution classification evaluated with AUC (left) and Average Precision (right). Higher is better. Rotation denotes $q(\vx)$ uses images in $p(\vx)$ yet each image is rotated randomly by $d \in (90, 270)$ degrees.}
    
    \begin{tabular}{ccc|cccc}
    \toprule
    $p(\vx)$ & Model & $q(\vx)$ & $\log p_\vtheta(\vx)$ & $T_{\text{perm}}$ & WAIC & Ours \\\midrule
    \multirow{4}{*}{\makecell{Fashion \\ MNIST}} & \multirow{4}{*}{RealNVP} 
        & Rotation & 0.76 / 0.78 & 0.64 / 0.68 &  \textbf{0.99} / \textbf{0.99} & \textbf{0.99} / \textbf{0.99} \\
     &  & MNIST    & 0.10 / 0.32 & 0.78 / 0.71 & 0.24 / 0.38  & \textbf{1.00} / \textbf{1.00} \\
     &  & Omniglot & 0.05 / 0.06 & 0.86 / 0.80 & 0.97 / 0.95 & \textbf{1.00} / \textbf{1.00} \\
     &  & KMNIST   & 0.47 / 0.46 & 0.45 / 0.45 & 0.63 / 0.65 & \textbf{1.00} / \textbf{1.00} \\\midrule
    \multirow{4}{*}{\makecell{Fashion \\ MNIST}} & \multirow{4}{*}{VAE}
        & Rotation & 0.73 / 0.72 & 0.61 / 0.64 & 0.94 / 0.95 & \textbf{0.97} / \textbf{0.98} \\
     &  & MNIST    & 0.13 / 0.33 & 0.73 / 0.68 & 0.56 / 0.64 & \textbf{1.00} / \textbf{1.00} \\
     &  & Omniglot & 0.00 / 0.06 & 0.99 / 0.96 & 0.90 / 0.83 & \textbf{1.00} / \textbf{1.00} \\
     &  & KMNIST   & 0.55 / 0.54 & 0.50 / 0.50 & 0.84 / 0.87 & \textbf{1.00} / \textbf{1.00} \\\midrule
     \multirow{4}{*}{CIFAR} & \multirow{4}{*}{RealNVP} 
        & Rotation & 0.87 / 0.87 & 0.79 / 0.79 & 0.99 / 0.98 & \textbf{1.00} / \textbf{1.00} \\
     &  & SVHN     & 0.07 / 0.52 & 0.86 / 0.82 & 0.16 / 0.55 & \textbf{1.00} / \textbf{1.00}  \\
     &  & ImageNet & 0.51 / 0.52 & 0.50 / 0.51 & 0.58 / 0.59 & \textbf{0.98} / \textbf{0.97} \\
     &  & LSUN     & 0.70 / 0.39 & 0.58 / 0.56 & 0.60 / 0.28 & \textbf{0.99} / \textbf{0.98} \\\midrule
     \multirow{4}{*}{CIFAR} & \multirow{4}{*}{PixelCNN++}
        & Rotation & 0.77 / 0.75 & 0.67 / 0.63 & 0.90 / 0.85 & \textbf{0.99} / \textbf{0.99} \\
     &  & SVHN     & 0.10 / 0.32 & 0.86 / 0.77 & 0.09 / 0.53 & \textbf{0.99} / \textbf{0.99} \\
     &  & ImageNet & 0.51 / 0.51 & 0.49 / 0.50 & 0.66 / 0.69 & \textbf{0.89} / \textbf{0.87} \\
     &  & LSUN     & 0.72 / 0.69 & 0.60 / 0.58 & 0.78 / 0.74 & \textbf{0.98} / \textbf{0.97} \\\bottomrule
    \end{tabular}
    \label{tab:batch-perm}
\end{table}

We verify the effectiveness of our proposed test statistic on several datasets and models, including RealNVP, VAE and PixelCNN++, and compare against several baselines including log-likelihood, permutation test and WAIC. First, we train a model $\tilde{p}_\vtheta(\vx)$ on a dataset $p(\vx)$; then we proceed to obtain the test statistic for each sample in $q(\vx)$, where $q(\vx)$ are several different datasets. For each $q(\vx) \neq p(\vx)$, we consider a binary classification problem, where the prediction is the test statistic for each sample $\vx$, and the label is whether $\vx$ is out-of-distribution (label 1) or not (label 0); larger prediction values indicate $\vx$ is more likely to be out-of-distribution. We evaluate the area under the ROC curve (AUC) and average precision (AP) for each binary classification task, following the procedure in~\citep{choi2018waic}. We select $r_1 = 0.1$ and $r_2 = 0.9$ for our proposed test statistic. 

The OoD detection results are shown in Table~\ref{tab:batch-perm}. As expected, using $\log p_\vtheta(\vx)$ results in poor AUC and AP in cases where the OoD samples have higher log-likelihoods; $T_{\text{perm}}$ outperforms $\log p_\vtheta(\vx)$ in cases where we also consider high log-likelihood samples as OoD; WAIC achieves higher AUC / AP than $\log p_\vtheta(\vx)$, yet its improvement is inconsistent across different models (RealNVP improvements are lower than VAE on Fashion MNIST). Our proposed statistic is able to detect all the out-of-distribution samples by achieving near-optimal AUC / AP in most cases; this is most notable on CIFAR vs. ImageNet and Fashion MNIST vs. KMNIST as the sample have very similar likelihood distributions in \textit{evaluation mode}.
In Appendix~\ref{app:r1r2}, we show that our method works well even with $r_1 \to 0$ and $r_2 = 0.15$; therefore, the method is not very sensitive to the $r_1, r_2$ hyperparameters. This suggest that in practice, even when the test samples contains only a small portion of the OoD samples we can still detect them reliably by selecting $r_1 \to 0$ (evaluation mode) and $r_2 \to 1$ (training mode).
related work still needs some work, but there seems to be some bug on overleaf right now\section{Related work}
\label{sec:related}
\paragraph{Task-dependent OoD detection}
Out-of-distribution detection is crucial to applications such as anomaly detection~\citep{pidhorskyi2018generative,hendrycks2016a,vyas2018out}, adversarial defense~\citep{song2017pixeldefend,song2018constructing}, and novelty detection for exploration~\citep{marsland2003novelty,bellemare2016unifying,fu2017ex2}.
In the context of supervised learning, OoD detection methods~\citep{hendrycks2016a,liang2017enhancing,devries2018learning,gal2016dropout,lakshminarayanan2017simple} are applied to prevent poorly-calibrated neural networks~\citep{guo2017calibration} from making high-confidence predictions on nonsensical inputs~\citep{szegedy2013intriguing,goodfellow2014explaining}; these methods are task dependent and are not suitable for task-independent cases, such as exploration over novel states~\citep{fu2017ex2,ostrovski2017count}.

\paragraph{Generative models for task-independent OoD detection} 
For high-dimensional inputs such as images, deep generative models are widely applied for density estimation~\citep{oord2016pixel,dinh2016density,kingma2018glow}, and thus are naturally considered for task-independent OoD detection~\citep{chalapathy2018group,xu2018unsupervised,kliger2018novelty,ostrovski2017count,li2018anomaly}. However, recent empirical evidence suggest that likelihood estimates by popular deep generative models are not reliable enough for OoD detection, even against samples that are not adversarial by construction~\citep{nalisnick2018do,hendrycks2018deep}. 
To address these issues, \citeauthor{vskvara2018generative} propose tuning the hyperparameters of VAEs with additional OoD data.
Choi et al.~\citep{choi2018waic} address this empirically by assuming that OoD samples have higher variance likelihood estimates under different independently trained models. \citeauthor{ren2019likelihood} consider learning a separate density model for potentially confounding background statistics.

\paragraph{Statistical tests for OoD detection} OoD detection can be posed as a hypothesis test where the null hypothesis assumes the data is OoD. One could also consider a more specific goodness-of-fit (GoF) test which determines whether a dataset is drawn from a specific distribution. \citet{nalisnick2019detecting} considers Kernelized Stein discrepancy~\citep{chwialkowski2016kernel,liu2016kernelized} and Maximum-mean discrepancy~\citep{gretton2012kernel} as two GoF tests that scales to generative models. They propose to compare the empirical cross-entropy of the samples with entropy of the dataset, which is related to the permutation test proposed in~\cite{song2017pixeldefend}. Our permutation test can also be cast as a GoF test with $r \to 1$, where all test samples in the same batch with \textit{training mode}, but we can also provide individual OoD predictions via different $r$ values.

\section{Conclusion}
\label{sec:conclusion}
In this work, we revisit the i.i.d. assumptions in generative models trained with batch normalization, which results in an alternative interpretation to the training objective. The difference between the training objective and the evaluation objective explains the intriguing observation that likelihood estimates over OoD examples in \textit{training mode} are much lower than that in \textit{evaluation mode}.
This allows us to develop a permutation test based on batch normalization, with which we can reliably detect OoD examples even for difficult cases such as Fashion MNIST vs. KMNIST.

We argue that batch normalization is merely one approach to introducing non-independence between observation variables in deep generative models. Neural processes~\citep{garnelo2018neural}, for example, considers generative models conditioned on certain observation contexts, which may differ significantly between regular and OoD batches. It would be interesting to investigate if such models could be utilized for more reliable out-of-distribution detection methods for high-dimensional data.

\bibliography{refs}
\bibliographystyle{iclr2020_conference}

\newpage
\appendix
\section{Proofs}
\subsection{Proof of Proposition~\ref{prop:normalization}}
\label{app:normalization}
\normalization*
\begin{proof}
Note that in \textit{training mode}, we can write down the batch norm function explicitly:
$$
\text{BatchNorm}(\vz; \gamma, \beta, \epsilon) = \frac{\vz - \frac{1}{b}\sum_{i=1} \vz_i}{\sqrt{\frac{1}{b-1} \sum_{i=1} (\vz_i - \frac{1}{b}\sum_{i=1}\vz)^2 + \epsilon}} \cdot \gamma + \beta
$$
if we fix $\vz_{-i}$, $\gamma$, $\beta$ and $\epsilon$, then $\text{BatchNorm}(\vz; \gamma, \beta, \epsilon)$ is a deterministic function over $\vz_{i}$. The corresponding neural network $g_\vtheta(\vx_i; \vx_{-i})$ is also a deterministic function over $\vx_{i}$ (given $\vx_{-i}$).

We summarize likelihood-based generative models into two types:
\begin{itemize}
    \item In the first type, the neural network produces the parameters for a certain tractable distribution, which is used to compute the likelihood (or its lower bound). This includes VAE and autoregressive models. Since the distribution over $\vx_{i}$ is parametrized by the deterministic function $g_\vtheta$, so the likelihood is naturally normalized.
    \item In the second type, RealNVP models are composed by affine coupling layers~\citep{dinh2014nice}, which contains transformations $(x, y) \mapsto (x', y')$ where
\begin{align}
    x' = x, \quad y' = s(x) \cdot y + t(x)
\end{align}
where $s(\cdot)$ and $t(\cdot)$ are neural networks (potentially with batch normalization layers). This has the reverse
\begin{align}
    x = x', \quad y = (y' - t(x')) / s(x')
\end{align}
Since $\text{BatchNorm}$ is a deterministic function over $\vz_i$ when $\vz_{-i}$ is fixed, the corresponding $s(\cdot)$ and $t(\cdot)$ functions also have the same property. Therefore, the corresponding RealNVP model over $\vx_i$ with fixed $\vx_{-i}$ is also invertible.

As $g_\vtheta$ is a deterministic function over $\vx_i$, the corresponding conditional likelihood is also normalized.
\end{itemize}
This completes the argument for likelihood-based generative models.
\end{proof}

\subsection{Proof of Proposition~\ref{prop:pseudo}}
\label{app:pseudo}
\pseudo*
\begin{proof}
We treat each sample in the batch as a separate random variable, and consider the corresponding graphical model where batch normalization is involved. Since \textit{training mode} aggregates the statistics of all the samples, all the random variables are connected to each other. 

First, we need to show that for large enough $b$, there exists a joint distribution $\tilde{p}_\vtheta(\vx_1, \vx_2, \ldots, \vx_b)$ that is compatible with all the conditional distributions we defined~\citep{arnold1989compatible}. From Theorem 4.1 in~\citep{arnold1989compatible}, we need to show that there exists a function $f$ such that 
\begin{align}
    \frac{\tilde{p}_\vtheta(\vx_i | \vx_{-i})}{\tilde{p}_\vtheta(\vx_j | \vx_{-j})} = \frac{f(\vx_{-j})}{f(\vx_{-i})}
    \label{eq:compatible-condition}
\end{align}
Let $f(\vx_{-j}) = \bb{E}_{\vx_j} [\sum_{k \neq j} \log \tilde{p}_\vtheta(\vx_{k} | \vx_{-k})]$ be the sum of conditional log-likelihood of $\vx_{k}$ under the expectation of some distribution over $\vx_{j}$. For large enough $b$, the value of $\vx_{j}$ will not affect the batch statistics, so $f(\vx_{-j}) = \sum_{k \neq j} \log \tilde{p}_\vtheta(\vx_{k} | \vx_{-k})$, so the condition for compatibility in Equation~\ref{eq:compatible-condition} holds, and there exists a compatible joint distribution, which we denote as $\tilde{p}_\vtheta(\vx_1, \vx_2, \ldots, \vx_b)$.

Next, we verify the conditions under which the product of the conditional likelihoods is a pseudo-likelihood of $\tilde{p}_\vtheta(\vx_1, \vx_2, \ldots, \vx_b)$.
The conditional likelihood for any sample $\vx_{j}$ is independent of $\vx_{-j}$ conditioned on its neighbors (which are $\vx_{-j}$), so 
$$\gL_{\text{train}}(\{x_j\}_{j=1}^{b}; \vtheta) = \sum_{j=1}^{b} \log \tilde{p}_\vtheta(\vx_j \vert \vx_{-j})$$ is the pseudo-log-likelihood objective~\citep{besag1975statistical} for $\{x_j\}_{j=1}^{b}$, which approximates the log-likelihood of the joint distribution.
\end{proof}

\section{Fooling likelihood-based permutation tests}
\label{sec:permutation}

As suggested by empirical evidence, 
we assume that model mis-specification always occurs and that our likelihood estimates cannot be used naively for detecting out-of-distribution samples 
due to the mode seeking nature of $\KL$.
Under this assumption, likelihood-based permutation tests may seems more effective than 
simple thresholding rules for
detecting out-of-distribution samples, as it treats both high-likelihood and low-likelihood samples as OoD. 
In this section, however, we show that likelihood-based permutation test can also be easily fooled, making them infeasible for OoD detection in a straightforward manner. 

We consider fooling a model $p_\vtheta(\vx)$ using samples generated from another model $q_\vphi(\vx)$ (trained on another dataset), where we modify the ``temperature'' $T$ of the generative model~\citep{kingma2018glow} (see Appendix~\ref{app:temperature} for details). Then we gather the likelihood estimates for samples in $p(\vx)$ and $q(\vx)$ and obtain the area under the ROC curve (AUC) using $T_{\text{perm}}$ as prediction values to classify if $\vx$ is OoD. We consider a RealNVP~\citep{dinh2016density} model trained on CIFAR and a PixelCNN++~\citep{salimans2017pixelcnn} model trained on Fashion MNIST. We use a RealNVP trained on SVHN, and a PixelCNN++ trained on MNIST to fool the two $p_\vtheta(\vx)$ models respectively. For RealNVP we set $T = 1.10$ and for PixelCNN++ we set $T = 1.17$.

In Figure~\ref{fig:roc-curve}, we show that it is possible to produce a distribution $q_\vphi(\vx)$ with generative models such that its likelihood distribution evaluated by $p_\vtheta(\vx)$ lies around the median likelihood in $p(\vx)$. This results in low AUC and thereby confusing the test based on $T_{\text{perm}}$. We show some out-of-distribution sample with similar bits-per-dimension (BPD) to training samples in Figure~\ref{fig:perm-samples}, where the generated samples look drastically different from training samples visually, even though they have similar log-likelihoods. We note that flipping the classifier predictions (i.e. using $-T_{\text{perm}}$) will not address this issue, since we would then treat low likelihood samples as more likely to be in-distribution.


\begin{figure}
\centering
\begin{subfigure}{0.40\textwidth}
    \centering
    \includegraphics[width=\textwidth]{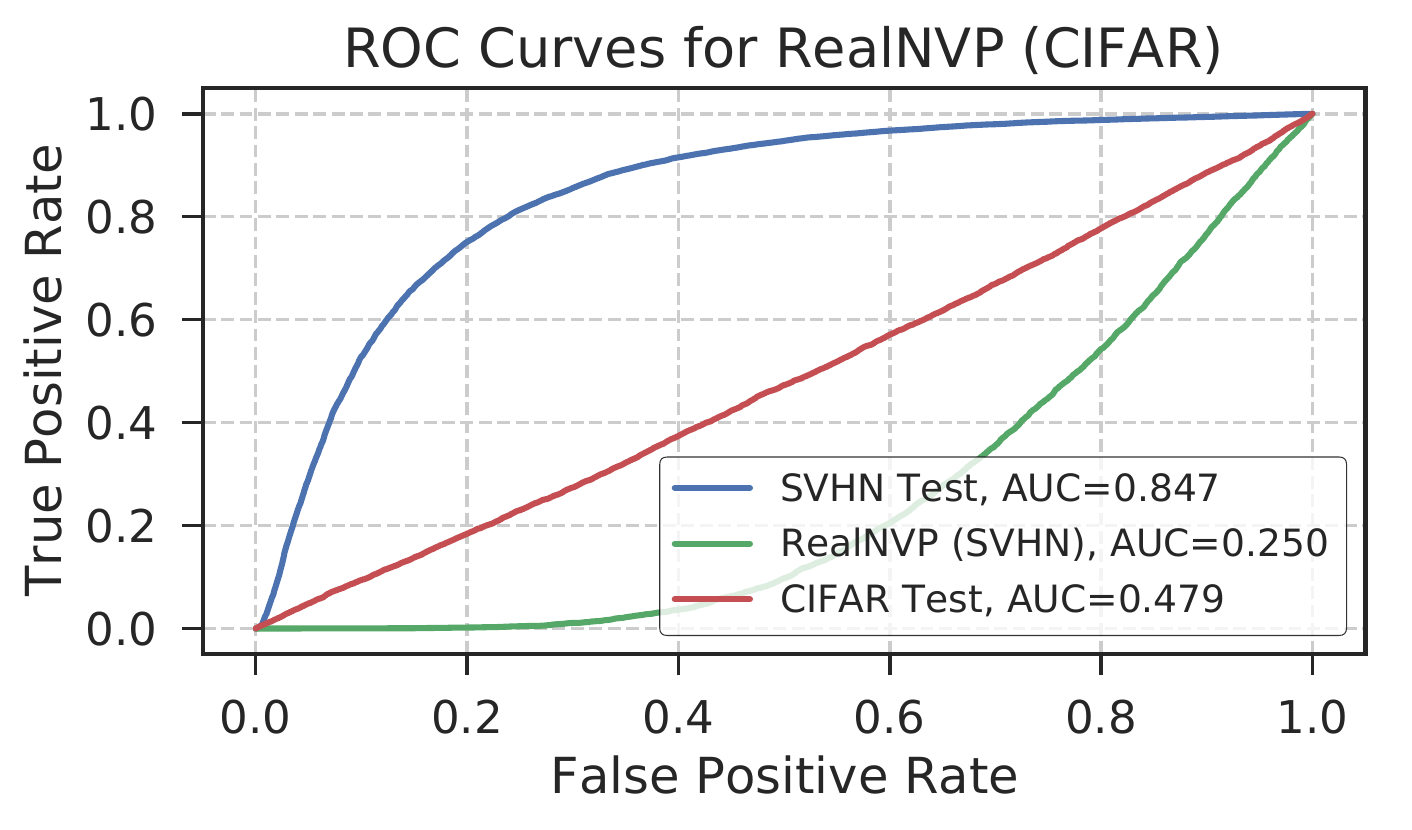}
\end{subfigure}
~
\begin{subfigure}{0.40\textwidth}
    \centering
    \includegraphics[width=\textwidth]{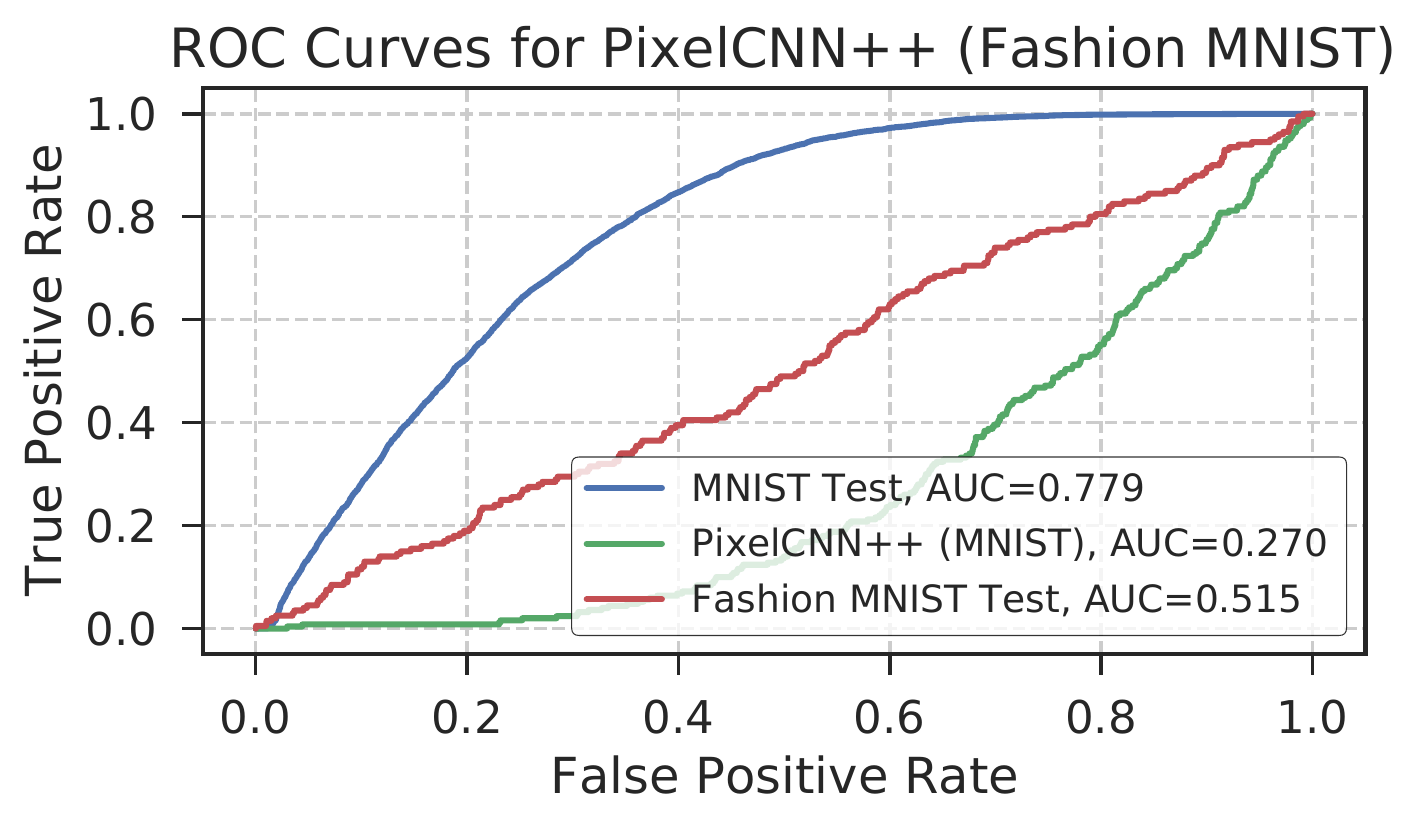}
\end{subfigure}
\caption{ROC curves for using the $p$-values for likelihood-based permutation tests. We assign positive labels to samples in $q(\vx)$ and negative labels to samples in $p(\vx)$. While such tests can detect OoD samples from other datasets, they could be confused by samples from another generative model.}
\label{fig:roc-curve}
\end{figure}

\begin{figure}
\centering
\begin{subfigure}{0.48\textwidth}
    \centering
    \includegraphics[width=\textwidth]{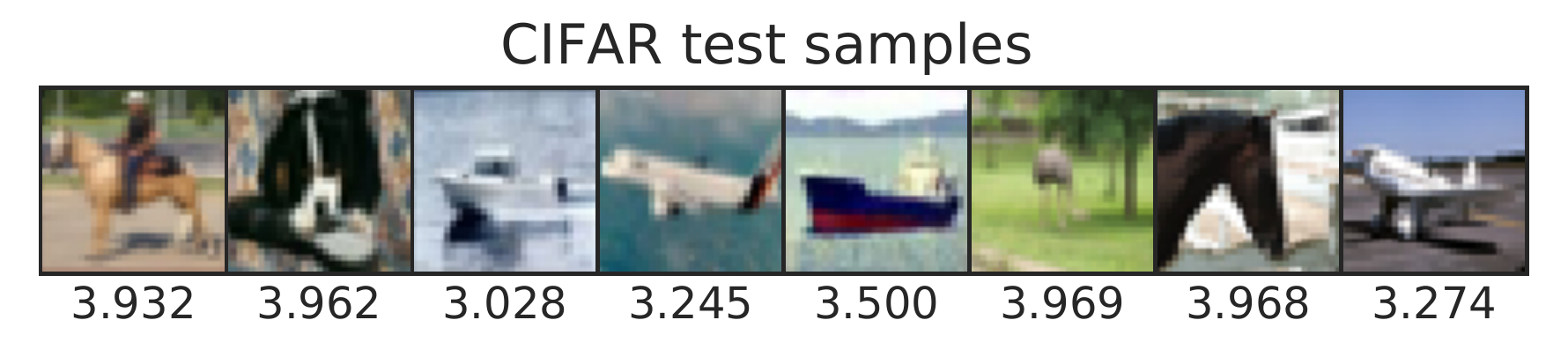}
\end{subfigure}
~
\begin{subfigure}{0.48\textwidth}
    \centering
    \includegraphics[width=\textwidth]{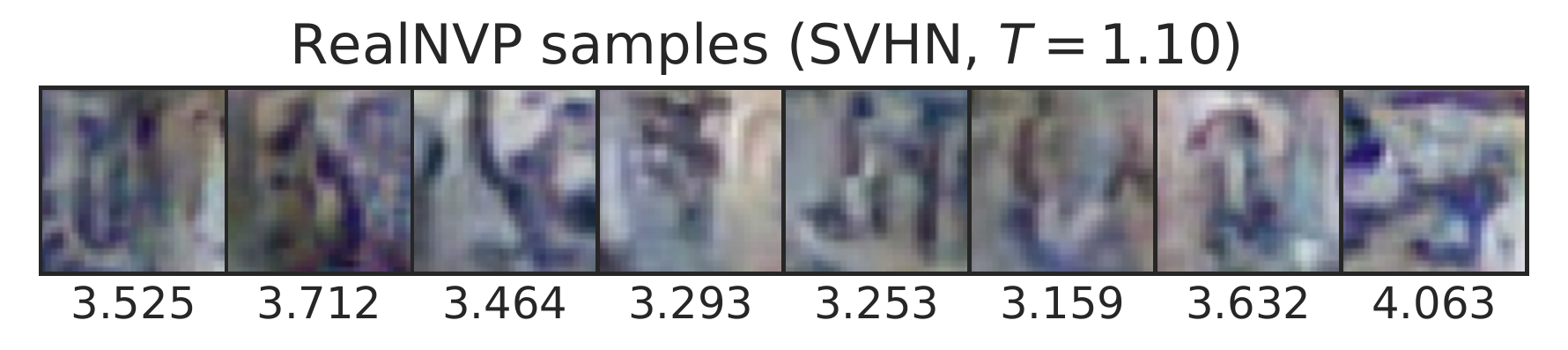}
\end{subfigure}
~
\centering
\begin{subfigure}{0.48\textwidth}
    \centering
    \includegraphics[width=\textwidth]{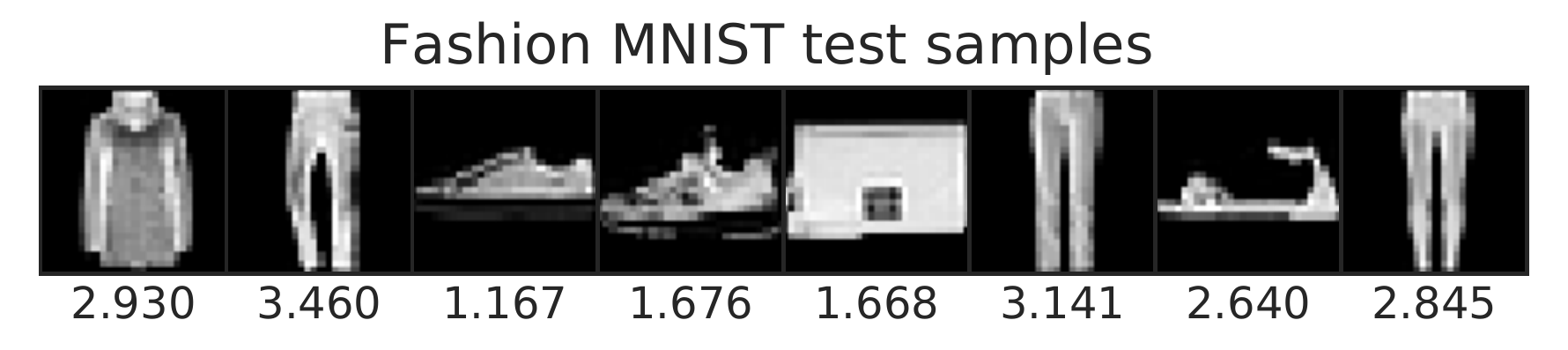}
\end{subfigure}
~
\begin{subfigure}{0.48\textwidth}
    \centering
    \includegraphics[width=\textwidth]{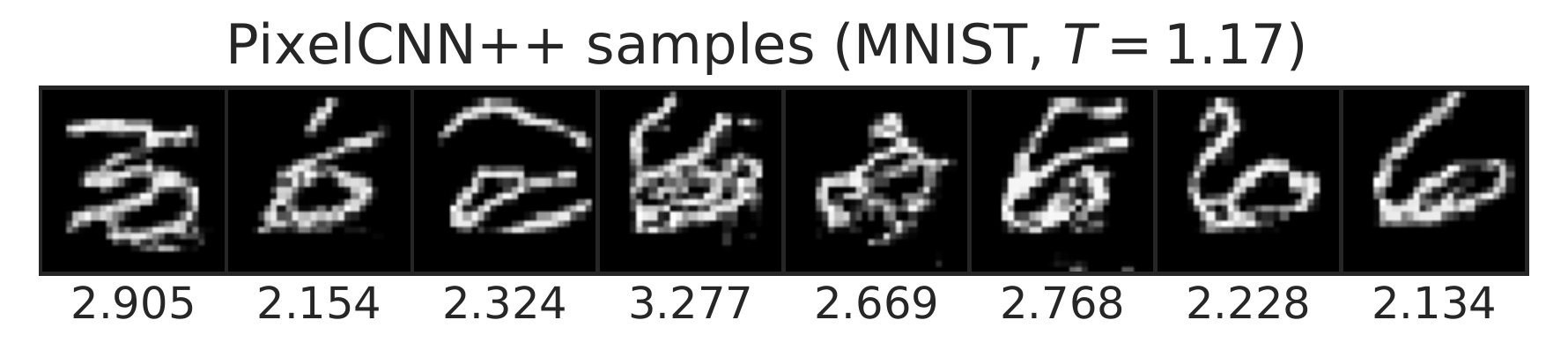}
\end{subfigure}
\caption{Samples and their BPD evaluated under $p_\vtheta(\vx)$. (Top) $p_\vtheta(\vx)$ is a RealNVP trained on CIFAR. (Bottom) $p_\vtheta(\vx)$ is a PixelCNN++ trained on Fashion MNIST.}
\label{fig:perm-samples}
\end{figure}



\section{Additional Experimental Details}
\subsection{Model Architectures}
For RealNVP~\citep{dinh2016density}, we consider downscale once, with 4 residual blocks for each affine coupling layer and 32 channels for each convolutional layer, except for CIFAR where we have 8 residual blocks with 64 channels for each convolutional layer.

For VAE~\citep{kingma2013auto,berg2018sylvester}, our inference network has 6 convolutional layers followed by a fully connected layer, and the generator network has 8 deconvolutional layers. We add batch normalization to all the convolutional layers and deconvolutional layers, except for the last two deconvolutional layers. We detail the convolutional layer hyperparameters in Table~\ref{tab:vae-arch}.

\begin{table}[h]
    \centering
    \caption{Architecture for VAE convolutional layers. $k$ denotes image width divided by 4, $c$ denotes the number of channels of the image (1 for gray, 3 for colored).}
    \label{tab:vae-arch}
    \begin{tabular}{c|cccc}
    \toprule
    \multicolumn{5}{c}{Inference network} \\\midrule
       Layer  & Channels & Kernel & Stride & Padding \\
    1     &  $32c$ & 5 & 1 & 2 \\
    2     &  $32c$ & 5 & 2 & 2 \\
    3     &  $32c$ & 5 & 1 & 2 \\
    4     &  $64c$ & 5 & 2 & 2 \\
    5     &  $64c$ & 5 & 1 & 2 \\
    6     &  256 & $k$ & 1 & 0 \\\midrule
    \multicolumn{5}{c}{Generator network} \\\midrule
       Layer  & Channels & Kernel & Stride & Padding \\
    1     &  $64c$ & 5 & 1 & 2 \\
    2     &  $64c$ & 5 & 2 & 2 \\
    3     &  $64c$ & 5 & 1 & 2 \\
    4     &  $32c$ & 5 & 2 & 2 \\
    5     &  $32c$ & 5 & 1 & 2 \\
    6     &  $32c$ & 5 & 1 & 2 \\
    7     &  256   & 5 &  1 & 2 \\
    8     &  $256c$ & 1 & 1 & 0 \\
    \bottomrule
    \end{tabular}
\end{table}

For PixelCNN++, we made two simple modifications to default hyperparameters~\citep{salimans2017pixelcnn}. We reduce the number of filters in each ResNet block from 160 to 80, and we use batch normalization after convolution as opposed to adding weight normalization over convolution.

All the models are trained with default optimizer hyperparameters with a batch size of 64. We use a batch size of 64 also in evaluation mode.

\subsection{Experimental procedures for Section~\ref{sec:permutation}}
\label{app:temperature}
We consider controlling the entropy of the generated samples by controlling a temperature hyperparameter $T$. In RealNVP, this is realized by multiplying the latent variable by a factor of $T$, which is equivalent to sampling from $\gN(0, T^2 \mI)$. In PixelCNN++, this is realized by dividing each pre-softmax scalar output by $T$; larger $T$ values would lead to higher entropy samples.

\subsection{Likelihood differences between training mode and testing mode on generated data}
\label{app:batchnorm-add}

We consider measuring the likelihood differences between \textit{training mode} and \textit{testing mode} using a RealNVP trained on CIFAR. The samples are generated from a RealNVP trained on SVHN, with several temperatures $T \in \{0.7, 1.0, 1.3\}$. Similar to the observation with SVHN samples, there is a large gap between \textit{training mode} and \textit{evaluation mode} evaluations.

\begin{table}[h]
\centering
\caption{Log-likelihood (measured in bits per dimension) calculated with a RealNVP trained on CIFAR and evaluated on generated samples from a RealNVP trained on SVHN. We report likelihood results with \textit{training mode}, \textit{evaluation mode}, and their difference ($\Delta$).}
\begin{tabular}{l|cc|c}
\toprule
    Dataset       & Mode  & BPD & $\Delta$ \\\midrule
   \multirow{2}{*}{CIFAR}  & \textit{evaluation} & 3.48 & \multirow{2}{*}{0.03} \\
     & \textit{training} & 3.51 &      \\\midrule
    \multirow{2}{*}{SVHN}         & \textit{evaluation} & 2.44 & \multirow{2}{*}{8.56} \\
             & \textit{training} & 11.10 &      \\\midrule
    \multirow{2}{*}{RealNVP (SVHN, $T = 1.0$)}         & \textit{evaluation} & 2.88 & \multirow{2}{*}{325} \\
         & \textit{training} & 328 &      \\\midrule
    \multirow{2}{*}{RealNVP (SVHN, $T = 0.7$)}         & \textit{evaluation} & 1.51 & \multirow{2}{*}{315} \\
         & \textit{training} & 317 &      \\\midrule
    \multirow{2}{*}{RealNVP (SVHN, $T = 1.3$)}         & \textit{evaluation} & 5.43 & \multirow{2}{*}{10.57} \\ & \textit{training} & 16.0 &      \\\bottomrule
    
\end{tabular}
\end{table}

\subsection{Out-of-distribution detection with alternative values of $r_1$ and $r_2$}
\label{app:r1r2}
We include additional results with alternative values of $r_1$ and $r_2$. If our method is able to achieve high performance with small $r_2$, this suggests that we can detect the OoD examples realiably even as they occupy a small portion within the batch. We consider $r_1 \to 0$ (which is \textit{evaluation mode}) and $r_2 \in \{0.15, 0.3, 0.5, 0.9\}$; we show our results in Table~\ref{tab:batchperm-exp-app}.

\begin{table}[h]
    \centering
    \caption{Out-of-distribution classification evaluated with AUC (left) and Average Precision (right). Rotation denotes $q(\vx)$ uses images in $p(\vx)$ yet randomly rotate each image by $d \in (90, 270)$ degrees.}
    
    \begin{tabular}{ccc|cccc}
    \toprule
    $p(\vx)$ & Model & $q(\vx)$ & $r_2 = 0.15$ & $r_2 = 0.3$ & $r_2 = 0.5$ & $r_2 = 0.9$ \\\midrule
    \multirow{4}{*}{\makecell{Fashion \\ MNIST}} & \multirow{4}{*}{RealNVP} 
        & Rotation & 0.93 / 0.95 & 0.91 / 0.94 &  0.98 / \textbf{0.99} & \textbf{0.99} / \textbf{0.99} \\
     &  & MNIST    & 0.92 / 0.93 & 1.00 / 1.00  & 1.00 / 1.00  & \textbf{1.00} / \textbf{1.00} \\
     &  & Omniglot & 0.96 / 0.97 & 0.95 / 0.96 & 1.00 / 1.00 & \textbf{1.00} / \textbf{1.00} \\
     &  & KMNIST   & 0.86 / 0.88 & 1.00 / 1.00 & 1.00 / 1.00 & \textbf{1.00} / \textbf{1.00} \\\midrule
    \multirow{4}{*}{\makecell{Fashion \\ MNIST}} & \multirow{4}{*}{VAE}
        & Rotation & 0.88 / 0.91 & 0.94 / 0.95 & 0.94 / 0.95 & \textbf{0.97} / \textbf{0.98} \\
     &  & MNIST    & 0.86 / 0.88 & 0.99 / 0.99 & 0.56 / 0.64 & \textbf{1.00} / \textbf{1.00} \\
     &  & Omniglot & 0.93 / 0.95 & 0.98 / 0.99 & 0.90 / 0.83 & \textbf{1.00} / \textbf{1.00} \\
     &  & KMNIST   & 0.85 / 0.88 & 0.99 / 0.99 & 0.84 / 0.87 & \textbf{1.00} / \textbf{1.00} \\\midrule
     \multirow{4}{*}{CIFAR} & \multirow{4}{*}{RealNVP} 
        & Rotation & 1.00 / 1.00 & 1.00 / 1.00 & 1.00 / 1.00 & \textbf{1.00} / \textbf{1.00} \\
     &  & SVHN     & 0.92 / 0.90 & 1.00 / 1.00 & 1.00 / 1.00 & \textbf{1.00} / \textbf{1.00}  \\
     &  & ImageNet & 0.74 / 0.72 & 0.92 / 0.90 & 0.98 / 0.98 & \textbf{0.98} / \textbf{0.97} \\
     &  & LSUN     & 0.77 / 0.75 & 0.98 / 0.97 & 1.00 / 1.00 & \textbf{0.99} / \textbf{0.98} \\\bottomrule
     \multirow{4}{*}{CIFAR} & \multirow{4}{*}{PixelCNN++}
        & Rotation & 0.87 / 0.83 & 0.97 / 0.94 & 1.00 / 0.99 & \textbf{0.99} / \textbf{0.99} \\
     &  & SVHN     & 0.76 / 0.72 & 0.95 / 0.92 & 1.00 / 1.00 & \textbf{0.99} / \textbf{0.99} \\
     &  & ImageNet & 0.62 / 0.58 & 0.74 / 0.70 & 0.87 / 0.85 & \textbf{0.89} / \textbf{0.87} \\
     &  & LSUN     & 0.72 / 0.69 & 0.80 / 0.75 & 0.98 / 0.98 & \textbf{0.98} / \textbf{0.97} \\\bottomrule
    \end{tabular}
    \label{tab:batchperm-exp-app}
\end{table}

\section{Details for the derivation in Section~\ref{sec:example}}
\label{app:example}

Suppose we try to learn some distribution $p(\vx)$ with a 2-d flow model with one coupling layer~\citep{dinh2016density}: $f: (x_1, x_2) \mapsto (z_1, z_2)$ where
\begin{align}
z_1 &= x_1 \\
z_2 &= x_2 + (x_1 - E(x_1)) / \sqrt{V(x_1)} \cdot \gamma + \beta
\end{align}
with learnable parameters $\gamma, \beta$; $E(x_1)$, $V(x_1)$ are the empirical mean and variance of a batch of $x_1$ in \textit{training mode}, and the mean and variance of $p(x_1)$ in \textit{evaluation mode}. 

We assume the distribution $p(x_1)$ to be a even mixture of two Gaussians ($\gN(-1, \sigma^2)$ and $\gN(1, \sigma^2)$ with $\sigma \to 0$, so it concentrates around $-1$ and $1$), and $p(x_2) = \delta_0(x_2)$ ($x_2$ concentrates at zero), so $z_2 \approx (x_1 - E(x_1)) / \sqrt{V(x_1)} \cdot \gamma + \beta$. 

The Jacobian of $f$ is:
\begin{align}
    J = \frac{\partial f(\vx)}{\partial \vx} = \begin{bmatrix}
    1 & 0 \\
    \gamma / \sqrt{V(x_1)} & 1
\end{bmatrix}
\end{align}
so its determinant is one.
The likelihood of the flow model is simply 
$$p_f(x_1, x_2) = p(f(x_1, x_2)) |J| = p(f(x_1, x_2))$$

We assume the prior $p(z_1, z_2)$ is standard Gaussian $\mathcal{N}(0, I)$. Therefore, we have:
\begin{align}
    p_f(x_1, x_2) \approx \varphi(x_1) \cdot \varphi((x_1 - E(x_1)) / \sqrt{V(x_1)} \cdot \gamma + \beta)
\end{align}
where $\varphi(x)$ is the probability density function for the standard Gaussian distribution. Note that $\gamma, \beta$ only depends on the second term. The maximum likelihood solution for $\beta = 0$ (due to symmetry); for $\gamma$, we can treat $p(x_1)$ as two point distributions at $-1$ and $1$, so we have the optimal $\gamma \approx \argmin_s -s^2/2 + \log s = 1$. Moreover, $\bb{E}_p[x_1] = 0$ and $\Var_p[x_1] \approx 1$.

For $p(x_1, x_2)$ in evaluation mode, we have
\begin{align}
    \bb{E}_p[\log p_f(x_1, x_2)] &\approx \bb{E}_p[\log \varphi(x_1) + \log \varphi((x_1 - E(x_1)) / \sqrt{V(x_1)}] \\
    & = \bb{E}_p[\log \varphi(x_1) + \log \varphi(x_1)] \approx -2.84.
\end{align}
For $q(x_1, x_2)$ in evaluation mode, we have
\begin{align}
    \bb{E}_q[\log p_f(x_1, x_2)] \approx \bb{E}_q[\log \varphi(x_1) + \log \varphi(x_1)] \approx -1.92.
\end{align}
For $q(x_1, x_2)$ in training mode, $E(x_1) = 0$, $V(x_1) = 1/\sqrt{12}$, so we have:
\begin{align}
    \bb{E}_q[\log p_f(x_1, x_2)] = \bb{E}_q[\log \varphi(x_1) + \log \varphi(\sqrt{12} x_1)] \approx -2.38.
\end{align}




\end{document}